\documentclass{article}

\usepackage{microtype}
\usepackage{graphicx}
\usepackage{subcaption}
\usepackage{booktabs} % for professional tables

\usepackage{hyperref}

\usepackage[accepted]{icml2026}

\usepackage{amsmath,amsthm,amsfonts,amssymb,mathtools}
\usepackage{enumitem}
\usepackage{bm,dsfont}
\usepackage[T1]{fontenc}
\usepackage{xcolor,pifont}
\usepackage{graphicx}
\usepackage{caption, subcaption, float}
\usepackage{hyperref}
\usepackage[title]{appendix}
\usepackage{algorithm,algorithmic,setspace}
\usepackage{booktabs}
\usepackage{multirow}
\usepackage{makecell}
\usepackage{color, colortbl, tcolorbox}
\usepackage[hang,flushmargin]{footmisc}
\usepackage{ifthen}

\usepackage{wrapfig}
\usepackage{tabularx}

\definecolor{Gray}{gray}{0.9}
\DeclareMathOperator*{\argmin}{arg\,min}

\makeatletter
\def\@copyrightspace{
\@float{copyrightbox}[b]
\begin{center}
\vspace{-1.3em}
\setlength{\unitlength}{1pc}
\begin{picture}(20,2.5)
\put(0,0){\parbox[b]{19.75pc}{\small \Notice@String}}
\end{picture}
\end{center}
\end@float}
\makeatother

\usepackage[capitalize,noabbrev]{cleveref}

\theoremstyle{plain}
\newtheorem{theorem}{Theorem}[section]

\newtheorem{corollary}[theorem]{Corollary}
\theoremstyle{definition}

\theoremstyle{remark}
\newtheorem{remark}[theorem]{Remark}

\usepackage[textsize=tiny]{todonotes}

\newcommand{\Nb}{\mathbb{N}}
\newcommand{\Eb}{\mathbb{E}}
\newcommand{\Rb}{\mathbb{R}}

\newcommand{\Nc}{\mathcal{N}}
\newcommand{\Lc}{\mathcal{L}}

\newcommand{\Mc}{\mathcal{M}}

\newcommand{\uv}{\mathbf{u}}

\newcommand{\wv}{\mathbf{w}}
\newcommand{\xv}{\mathbf{x}}
\newcommand{\yv}{\mathbf{y}}

\newcommand{\fv}{\mathbf{f}}

\newcommand{\mv}{\mathbf{m}}

\newcommand{\Iv}{\mathbf{I}}
\newcommand{\Rv}{\mathbf{R}}
\newcommand{\Pv}{\mathbf{P}}
\newcommand{\Ov}{\mathbf{O}}

\newcommand{\Vv}{\mathbf{V}}
\newcommand{\Xv}{\mathbf{X}}
\newcommand{\Uv}{\mathbf{U}}

\newcommand{\Wv}{\mathbf{W}}

\icmltitlerunning{Accelerating GP Inference with SIKA-GP}

\begin{document}

\twocolumn[
  \icmltitle{SIKA-GP: Accelerating Gaussian Process Inference with Sparse Inducing Kernel Approximations for Bayesian Deep Learning}

  % It is OKAY to include author information, even for blind submissions: the
  % style file will automatically remove it for you unless you've provided
  % the [accepted] option to the icml2026 package.

  % List of affiliations: The first argument should be a (short) identifier you
  % will use later to specify author affiliations Academic affiliations
  % should list Department, University, City, Region, Country Industry
  % affiliations should list Company, City, Region, Country

  % You can specify symbols, otherwise they are numbered in order. Ideally, you
  % should not use this facility. Affiliations will be numbered in order of
  % appearance and this is the preferred way.
  \icmlsetsymbol{equal}{*}

  \begin{icmlauthorlist}
    \icmlauthor{Wenyuan Zhao}{yyy}
    \icmlauthor{Rui Tuo}{sch}
    \icmlauthor{Chao Tian}{yyy}
    % \icmlauthor{Firstname4 Lastname4}{sch}
    % \icmlauthor{Firstname5 Lastname5}{yyy}
    % \icmlauthor{Firstname6 Lastname6}{sch,yyy,comp}
    % \icmlauthor{Firstname7 Lastname7}{comp}
    %\icmlauthor{}{sch}
    % \icmlauthor{Firstname8 Lastname8}{sch}
    % \icmlauthor{Firstname8 Lastname8}{yyy,comp}
    %\icmlauthor{}{sch}
    %\icmlauthor{}{sch}
  \end{icmlauthorlist}

  \icmlaffiliation{yyy}{Department of Electrical and Computer Engineering, Texas A\&M University, College Station, US}
  % \icmlaffiliation{comp}{Company Name, Location, Country}
  \icmlaffiliation{sch}{Department of Industrial and Systems Engineering, Texas A\&M University, College Station, US}

  \icmlcorrespondingauthor{Wenyuan Zhao}{wyzhao@tamu.edu}
  % \icmlcorrespondingauthor{Firstname2 Lastname2}{first2.last2@www.uk}

  % You may provide any keywords that you find helpful for describing your
  % paper; these are used to populate the "keywords" metadata in the PDF but
  % will not be shown in the document
  \icmlkeywords{GP, Bayesian learning, inference, BNN}

  \vskip 0.3in
]

% this must go after the closing bracket ] following \twocolumn[ ...

% This command actually creates the footnote in the first column listing the
% affiliations and the copyright notice. The command takes one argument, which
% is text to display at the start of the footnote. The \icmlEqualContribution
% command is standard text for equal contribution. Remove it (just {}) if you
% do not need this facility.

% Use ONE of the following lines. DO NOT remove the command.
% If you have no special notice, KEEP empty braces:
\printAffiliationsAndNotice{}  % no special notice (required even if empty)
% Or, if applicable, use the standard equal contribution text:
% \printAffiliationsAndNotice{\icmlEqualContribution}

\begin{abstract}
Gaussian processes (GPs) provide a principled Bayesian framework for uncertainty estimation, but their computational complexity severely limits scalability to large datasets. 
% Sparse GP methods mitigate this issue by incorporating inducing points. 
%However, traditional approaches face challenges when applied to large datasets and high-dimensional feature spaces, or when multiple layers of Gaussian processes are needed, usually referred to as deep Gaussian processes (DGPs). 
We propose SIKA-GP, which accelerates GP inference using sparse inducing kernel approximations based on a dyadic ordered template basis, 
% reducing the complexity from ${O}(NM^2)$ to ${O}(N\log M)$.
incurring only ${O}(\log M)$ complexity dependence on the number of inducing points.
Our approach constructs compact and expressive kernel representations from sparsely activated bases, 
% which admit efficient tensorized implementations and are well-suited for GPU-accelerated computation, enabling seamless integration with modern large-scale models. 
enabling efficient tensorized GPU computation and seamless integration with modern large-scale models.
SIKA-GP can be naturally embedded into Bayesian neural networks (BNNs) with sparse activations, yielding significant speedups in both training and inference without sacrificing predictive performance.
% We further generalize the accelerated GP inference to deep feature learning, where the deep architectures and large-scale neural feature representations introduce further scalability challenges. 
The method naturally extends to deep feature learning, addressing the scalability challenges introduced by deep architectures and high-dimensional feature representations.
Empirical results on vision and transformer-based language benchmarks demonstrate that our approach consistently delivers fast and accurate GP models, providing a principled path toward scalable kernel learning.
\end{abstract}

\section{Introduction}
\label{sec:introduction}

Gaussian processes (GPs) \citep{rasmussen2006gaussian} provide a powerful Bayesian nonparametric framework for supervised learning, offering calibrated uncertainty estimates and strong generalization performance. 
% By placing a distribution over functions, GPs enable principled uncertainty quantification, which is critical for safety-sensitive applications such as autonomous systems, medical diagnostics, and scientific modeling. 
However, the cubic complexity ${O}(N^3)$ in the number of training points $N$, due to the kernel matrix inverse, renders the exact inference computationally infeasible for large-scale datasets.  

To address this challenge, \emph{sparse} GPs (SGPs) approximate the full covariance using a smaller set of inducing points \citep{quinonero2005unifying, snelson2006sparse}. Variational formulations \citep{titsias2009variational, hensman2013gaussian} and structured kernel methods \citep{wilson2015kernel, wilson2020efficiently} have pushed the scalability of GPs to larger datasets. \citet{ding2024sparse} and \citet{zhao2025deep} convert SGPs into Bayesian neural networks (BNNs) through induced kernel approximation, thus harnessing the advantages of parallel computing found in NNs. However, these approaches still face difficulties when applied to very high-dimensional feature spaces or when many inducing points are required, which limits both computational efficiency and predictive accuracy. 
% These works are closest to ours in connecting inducing-kernel GP layers and BNN representations, but their practical forward computation remains dense in the inducing dimension or relies on generic matrix factorizations. 
% SIKA-GP separates this inherited inducing-kernel representation from our new computational contribution: a dyadic ordering, closed-form locally supported Laplace basis functions, and Tensorized Sparse Indexing (TSI) that gathers only one active basis per dyadic level.

In this work, we propose a new algorithm to accelerate GP inference through \emph{sparse inducing kernel approximation} (SIKA-GP). %defined by a dyadic ordered template basis. 
Our approach takes advantage of the sparse structure of the dyadic kernel basis, producing significant improvements in both training and inference\footnote{We use ``test'' and ``inference'' interchangeably in this work.} speed while preserving predictive accuracy. Importantly, the method naturally extends to applications in Bayesian deep learning (BDL), such as DGPs and DKL, where scalability is a central obstacle. Furthermore, we show that our approximation integrates seamlessly into sparsely activated Bayesian neural network (BNN) architectures, enabling hybrid deterministic-probabilistic models that combine uncertainty quantification with deep feature learning. We summarize our contributions as follows.

\vspace{-1mm}
\begin{itemize}[topsep=0pt,itemsep=1pt]
    \item We propose SIKA-GP designed on a set of compactly supported basis functions with close-form expressions, providing accelerated GP inference. \vspace{-0.2cm}
    \item We extend SIKA-GP to deep feature learning, where scalability challenges are particularly acute due to high dimensionality and large-scale datasets. \vspace{-0.2cm}
    \item We demonstrate through theoretical and experimental analysis that our approach achieves significant speedups over existing dense and sparse GP baselines, while maintaining predictive accuracy.  
\end{itemize}
\vspace{-0.2cm}
Finally, we release the data and code for SIKA-GP to encourage further studies of efficient inference and uncertainty estimation in BDL: \url{https://github.com/warrenzha/sika-gp}.

\section{Preliminaries}

% \vspace{-0.2cm}
\paragraph{Gaussian Processes.} A GP $f(\xv)$ is fully characterized by its mean function $\mu(\xv)$ and its covariance function (kernel) $K(\xv,\xv')$. Given a dataset $\mathcal{D}=\{ \Xv, \yv \}$, $\Xv\in \mathbb{R}^{N\times D}$ are $N$ training points, of which $\xv_i\in\Rb^D$ is the $D$-dimensional feature, and $\yv= [ y_1,\ldots,y_N ]^{\top}$ is the corresponding noisy observation where $y_i\in \Rb$. 
%\textcolor{red}{Chao-comment: Is this accurate? It seems we do not always have this additive noise (only for regression task). Moreover, it seems we also don't need this anywhere.}
We assumed that $\mu(\xv)$ is a constant (usually assigned to zero). In exact GPs where the likelihood $P(\yv|f(\Xv))$ is also Gaussian, the predictive posterior $P(\fv^*|\Xv^*, \yv, \Xv)$ at $N^{\ast}$ test points $\Xv^{\ast}\in \mathbb{R}^{N^*\times D}$ is also Gaussian, which can be written in a closed form involving the inverse of the covariance matrix at the observation points.
% $y_i=f(\xv_i) + \epsilon_i$ is the noisy observation with $\epsilon_i \sim \mathcal{N}(0, \sigma^2_f)$ for $i=1,\ldots,N$. The predictive posterior $\fv^{\ast}:=f(\Xv^{\ast})$ at $N^{\ast}$ test points $\Xv^{\ast}:= \{ \xv_{i}^{\ast} \in \mathbb{R}^D \}_{i=1}^{N^{\ast}}$ can be expressed in a closed-form as a Gaussian distribution $\Xv^{\ast} \sim \mathcal{N} \left( \bm{\mu}^{\ast} , \bm{\Sigma}^{\ast} \right)$: 
% \begin{align}
%     &\bm{\mu}^{\ast} = \mathbf{K}_{\Xv^*,\Xv} \left( \mathbf{K}_{\Xv,\Xv} + \sigma_f^2 \Iv \right) ^{-1} \yv,\label{eq:GPR mean}\\
%     &\bm{\Sigma}^{\ast} = \mathbf{K}_{\Xv^*,\Xv^*}
%     - \mathbf{K}_{\Xv^*,\Xv} \left( \mathbf{K}_{\Xv,\Xv} + \sigma_f^2 \Iv \right)^{-1} \mathbf{K}_{\Xv,\Xv^*},
% \end{align}
% where $\mathbf{K}_{\Xv,\Xv'}$ is the kernel matrix $[\mathbf{K}_{\Xv,\Xv'}]_{ij}:=K(\xv_{i}, \xv'_{j})$. 
The computational complexity of exact GPs scales cubically in the number of training points ${O}(N^3)$, which is impractical for large datasets without approximations. For non-regression tasks or deep Gaussian processes, posteriors are non-Gaussian and usually require additional approximation or other treatments. We refer the reader to \citet{williams2006gaussian} for more details. 

\vspace{-0.2cm}
\paragraph{DGP and DKL.}
To address the limited expressiveness of GPs in complex, nonstationary, or hierarchical data, several extensions of GPs have been developed to integrate them into the Bayesian deep learning landscape. DGPs~\citep{damianou2013deep} extend standard GPs by stacking GPs in multiple layers, where the output of one layer serves as input to the next:
\begin{align}\label{eqn:dgps}
    f(\xv)=f^{(H)}\circ f^{(H-1)}\circ \cdots \circ f^{(1)}\left( \xv \right).
\end{align}
DKL~\citep{wilson2016deep}, on the other hand, maps raw inputs into a feature space using deep neural networks (DNNs), followed by a single layer of GP:
\begin{align}
    f( \mathbf{x} ) \sim \mathcal{GP} \left( \bm{0},K( \mathrm{NN} (\xv),\mathrm{NN} (\xv^{\prime}) ) \right).
\end{align}
% To resolve the issues of scalability and interpretability in high-dimensional settings, additive GPs \citep{duvenaud2011additive} adopt a kernel that decomposes into a sum of low-dimensional components:
% \begin{align}
%     f(\xv)=\sum^D_{i=1}f_i(x_{i}), ~f_i\sim \mathcal{GP}(0, k_i(x_i,x_i')),
% \end{align}
% where $k_i$ is a base kernel applied to a one-dimensional component $x_i$.

\vspace{-0.2cm}
\paragraph{Additive GP.}
To address the curse of dimensionality, an additive approximation is commonly used to decompose GPs with high-dimensional features into first-order additive GPs~\citep{duvenaud2011additive}. For input $\xv\in\mathbb{R}^{D}$, i.e., $D$-dimensional features, the additive GP can be expressed by 
% \begin{align}
%     \tilde{f}(\xv) = \sum_{i=1}^{D} \tilde{f}_{i}(x_{i}) =\sum_{i=1}^{D} \phi (x_{i})\mathbf{w}_{i} =\Phi(\xv) \Wv, \label{eqn:additiveBNN}
% \end{align}
\begin{align}
    f(\xv)=\sum_{d=1}^{D} f_{d}(x_{d}), ~~ K(\xv,\xv')=\sum_{d=1}^{D} K_d(x_d,x'_d), 
\end{align}
where $f_d(x_d)$ denotes a \textit{base} GP.
% Consequently, $\tilde{f}(\xv)$ is a GP with an additive kernel:
% \begin{align}
%     K(\xv,\xv') =\sum_{i=1}^{D} K(x_i,\Uv_i)K^{-1}(\Uv_i,\Uv_i)K(\Uv_i,x'_i).
% \end{align}

\vspace{-0.2cm}
\paragraph{Connecting GP to BNN.}
%The predominant challenges of GP are related to the computational complexity associated with the requirements of $\mathcal{O}(N^3)$ inference and $\mathcal{O}(N^2)$ storage, and the issues are even more pronounced in DGPs. 
To reduce computational burden of exact GPs, inducing point methods approximate a GP $f(\xv)$ using $M$ inducing points $\Uv=\{ \uv_{i}\in \mathbb{R}^D \}_{i=1}^{M}$: 
\begin{equation*}
%\begin{aligned}
f(\xv):= K(\xv,\Uv) [K(\Uv, \Uv)]^{-1} f(\Uv). %\\
%= &K(x,\Uv) \left[ \Lv_{\Uv} \Lv_{\Uv}^{\top} \right]^{-1} f(\Uv) \\
%= &K(x,\Uv) \left[ \Lv_{\Uv}^{\top} \right]^{-1} \Lv_{\Uv}^{-1} f(\Uv),
%\end{aligned}
\end{equation*}
Suppose $K(\Uv, \Uv)=\Pv_{\Uv}\Pv_{\Uv}^{\top}$, then this factorization allows us to rewrite $f(\xv)$ as a finite-width BNN
\begin{equation}\label{BNN}
\begin{aligned}
f(\xv)= & K(\xv,\Uv) \left[ \Pv_{\Uv} \Pv_{\Uv}^{\top} \right]^{-1} f(\Uv) \\
= &\left[K(\xv,\Uv)  \Pv_{\Uv}^{-\top}\right]\left[ \Pv_{\Uv}^{-1} f(\Uv)\right]\\
= &\phi(\xv)\cdot \wv,
\end{aligned}
\end{equation}
where $\phi(\xv):=K(\xv,\Uv) \Pv_{\Uv}^{-\top}$ is a set of non-linear activation functions (for the fixed kernel and $\Uv$), and $\wv:=\Pv_{\Uv}^{-1} f(\Uv)\sim \mathcal{N}(\mathbf{0}, \Iv)$ are i.i.d. normal random weights. This BNN representation enables efficient training via parallel computing and backpropagation~\cite{wen2018flipout}. It is worth mentioning that $\Pv_\Uv$ is \textit{not} unique since $\Pv_{\Uv}\Pv_{\Uv}^{\top}=\left[\Pv_\Uv\Ov\right]\left[\Pv_\Uv\Ov\right]^{\top}$ for any orthogonal matrix $\Ov$, and in turn, the BNN representation in \eqref{BNN} is not unique.

% where  are the $M$ inducing points ($M\ll N$) and $\Lv_{\Uv}$ is the orthogonal decomposition of $K(\Uv, \Uv)$. As stated in \citet{ding2020generalization}, the number of inducing points necessary to achieve the optimal statistical error is significantly less than the sample size if the inducing points $\Uv$ are selected appropriately.

\section{Related Work}

% \vspace{-0.1cm}
\paragraph{Inducing point GPs.}
To overcome the ${O}(N^3)$ cost of exact GPs \citep{rasmussen2006gaussian}, a large body of work introduces a sparse variational structure through inducing points  \citep{quinonero2005unifying,snelson2006sparse,titsias2009variational}, which reduces the inference complexity to ${O}(NM^2)$ where $M$ is the number of inducing points. Stochastic variational GP (SVGP) enables mini-batch training for large $N$ and classification \citep{hensman2013gaussian,hensman2015scalable}. Our method shares the goal of reducing complexity but differs in replacing learnable inducing locations with SIKA on a dyadic grid, producing a level of sparsity beyond learning the inducing points alone.

\vspace{-0.2cm}
\paragraph{Structured kernels and interpolation.}
Exploiting kernel structure can lead to major computational speedups in GPs: Toeplitz/Kronecker methods \citep{saatcci2012scalable,wilson2014fast}, structured kernel interpolation (SKI/KISS-GP) \citep{wilson2015kernel,pleiss2018constant}, and exact/near–exact GPU solvers for very large datasets \citep{gardner2018gpytorch,wilson2020efficiently,wang2019exact}. %{\color{red}(Need to rewrite the text below. Our method is irrelevant with the sparse grid approach.)} 
Compactly supported feature functions \citep{chen2022kernel,ding2024sparse} can lead to sparse matrices in learning with GPs and DGPs.
%{\color{blue}Rather than interpolating from a dense grid or inducing points (learned), the representations of the kernel designed on a \emph{sparse–grid}, which originate in numerical analysis \citep{smolyak1963quadrature,bungartz2004sparse}, allow the computational complexity to scale as $\mathcal{O}(\log M)$ \citep{ding2024sparse}.}
% Our approach is complementary: rather than interpolating from a dense grid or unstructured (learned) inducing points, we build a \emph{sparse–grid} kernel representation whose computational complexity scales linearly with the dyadic level, i.e., $\mathcal{O}(\log M)$, when used in conjunction with variational inference. Sparse–grid ideas originate in numerical analysis \citep{smolyak1963quadrature,bungartz2004sparse} and have been explored in Sparse DGPs \citet{ding2024sparse}. 
% Before our work, the sparsity of kernel feature functions, although mentioned in previous work, has not been utilized in the inference and computation pipeline, particularly in its equivalent form of BNNs.
% However, to our knowledge, the effective use of sparsely activated basis to accelerate GP inference, particularly in its equivalent form of BNNs, has not been explored in previous work.

\vspace{-0.1cm}
\paragraph{GPs and Bayesian deep learning.}
DGPs compose multiple GP layers for hierarchical representations \citep{damianou2013deep}, but inference is challenging \citep{salimbeni2017doubly, bui2016deep, cutajar2017random}. Our approximation is integrated into each separate GP layer, reducing the cost per–layer so that deep architectures become practical. For DKL, scalable implementations on GPUs \citep{gardner2018gpytorch, charlier2021keops} and interpolated covariances \citep{pleiss2018constant,zhao2025deep} integrate DKL with BDL. We show that replacing dense/interpolated covariances with the proposed SIKA yields further training and inference gains, and we demonstrate its compatibility with sparsely–activated BNN backbones.

\section{SIKA-GP: GP with Sparse Inducing Kernel Approximations}

Our goal is to design a computationally efficient framework that can substantially accelerate GPs in both training and inference. In particular, the underlying structured sparsity of GPs can be equivalently represented as \textit{sparsely activated} BNNs in \eqref{BNN}, where $\phi(x)$ is the structured sparse activation for each $x$. %The sparsity can substantially boost the computational efficiency in both training and inference. 
We refer to this method as GP with sparse inducing kernel approximations (SIKA-GP).

\subsection{SIKA-GP with Laplace correlation}\label{sec:SIKA}

The main idea of the construction here originates from the sparse representation of Gauss-Markov kernels \citep{ding2020generalization,ding2024sparse}. %The approach leads to exactly sparse representations under appropriate kernel $K$ and inducing points $\Uv$.
%In this work, we consider a special class of kernel such that the induced kernel decomposition has a sparse structure with appropriate inducing points $\Uv$. 
Here we focus on the Laplace kernel $K(x,y)=\exp(-\theta|x-y|)$, which is Markovian and stationary. 
Throughout the paper, $L$ denotes the \emph{dyadic level} and $M=2^L+1$ denotes the corresponding number of inducing points.
We assume that the input $x$ is in $[0,1]$\footnote{For arbitrary input $x\in\mathbb{R}$, we can use a scaling function to normalize $x$ into $[0,1]$, which is a common choice in SKI.} and $M=2^L+1$ evenly spaced inducing points are used, given by 
\begin{align}
    \Uv:=\{0\cdot 2^{-L},1\cdot2^{-L},\ldots,2^{L}\cdot2^{-L}\}. \label{eqn:inducing U}
\end{align}
This framework allows for a BNN representation equipped with explicitly structured and closed-form sparse activation functions, as shown in \cref{Th:sparse}.

\begin{theorem}\label{Th:sparse}
Given a Laplace kernel defined by $K(x,y)=\exp(-\theta|x-y|)$ and inducing points $\Uv$ in \eqref{eqn:inducing U}, there is a sparsely activated BNN in terms of \cref{BNN} with a set of basis functions $\phi(x)=\{\psi_{lm}:[0,1]\rightarrow \Rb\}$, given by\par

\vspace{-0.4cm}
\small{
\begin{align*}    
\psi_{01}(x):=\frac{\exp\{-\theta x\}+\exp\{-\theta(1-x)\}}{\sqrt{2(1+\exp\{-\theta\})}},\\
\psi_{02}(x):=\frac{\exp\{-\theta x\}-\exp\{-\theta(1-x)\}}{\sqrt{2(1-\exp\{-\theta\})}},
\end{align*}
}
and
\begin{small}
\begin{eqnarray*}
\psi_{lm}(x):=
    \begin{cases}
\sqrt{\dfrac{2}{\sinh\!\big(2^{1-l}\theta\big)}}\;\sinh\!\big(\theta(x-(m-1)2^{-l})\big), \\ \qquad\qquad\text{if } (m-1)2^{-l}\le x\le m2^{-l},\\[10pt]
\sqrt{\dfrac{2}{\sinh\!\big(2^{1-l}\theta\big)}}\;\sinh\!\big(\theta((m+1)2^{-l}-x)\big), \\ \qquad\qquad\text{if } m2^{-l}\le x\le (m+1)2^{-l},\\[10pt]
0, \quad\text{if }x\notin[(m-1)2^{-l},(m+1)2^{-l}],
\end{cases}
\end{eqnarray*}
\end{small}
for $l=1,2,\ldots,L$, and $m=1,3,\ldots,2^{l}-1$.
\end{theorem}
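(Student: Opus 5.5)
The plan is to rephrase the existence of a BNN representation as an orthonormality statement in the reproducing kernel Hilbert space $\mathcal{H}$ of $K$ on $[0,1]$. Suppose $\phi(x)=K(x,\Uv)A$ for some invertible $M\times M$ matrix $A$ with $A^{\top}K(\Uv,\Uv)A=\Iv$. Then $\Pv_{\Uv}:=A^{-\top}$ satisfies $\Pv_{\Uv}\Pv_{\Uv}^{\top}=K(\Uv,\Uv)$ and $\phi(\xv)=K(\xv,\Uv)\Pv_{\Uv}^{-\top}$, which is exactly the form in \eqref{BNN}. By the reproducing property, $\langle K(\cdot,u_i),K(\cdot,u_j)\rangle_{\mathcal{H}}=K(u_i,u_j)$, so the condition above reads: the functions $\psi_{lm}$ lie in $V:=\mathrm{span}\{K(\cdot,u_i)\}_{i=0}^{2^L}$ and form an orthonormal system in $\mathcal{H}$. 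Counting gives $2+\sum_{l=1}^{L}2^{l-1}=2^L+1=M$ functions. Orthonormality then forces linear independence, so they form an orthonormal basis of $V$ and $A$ is invertible.

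\textbf{Membership in $V$.} I will characterize $V$ explicitly. Each $K(\cdot,u_i)$ restricted to $[0,1]$ has three properties: it is continuous; on every grid cell $[k2^{-L},(k+1)2^{-L}]$ it lies in $\mathrm{span}\{e^{\theta x},e^{-\theta x}\}$; and it satisfies $f'(0)=\theta f(0)$ and $f'(1)=-\theta f(1)$. The space of functions with these properties has dimension $M$, since it is determined by its $M$ nodal values. The kernel sections $K(\cdot,u_i)$ are independent because $K(\Uv,\Uv)\succ 0$, so this space equals $V$. The functions $\psi_{01}$ and $\psi_{02}$ are explicitly $(K(\cdot,0)\pm K(\cdot,1))$ divided by the stated constants. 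For $l\ge 1$, each $\psi_{lm}$ is continuous and piecewise in $\mathrm{span}\{\sinh,\cosh\}$, with knots at level-$l$ dyadic points, which belong to $\Uv$. It also vanishes near both endpoints of $[0,1]$, so the boundary conditions hold trivially. Hence every $\psi_{lm}\in V$.

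\textbf{Orthonormality.} I will use the explicit norm of the Laplace (Ornstein--Uhlenbeck) RKHS on $[0,1]$:
\[
\langle f,g\rangle_{\mathcal{H}}=\frac{1}{2\theta}\int_0^1 \big(f'g'+\theta^2 fg\big)\,dx+\frac{f(0)g(0)+f(1)g(1)}{2}.
\]
This can be verified by checking the reproducing property through integration by parts. Now take $l\ge1$ and any other basis function $g$, either of a coarser level $l'<l$ or one of $\psi_{01},\psi_{02}$. On the support $[(m-1)2^{-l},(m+1)2^{-l}]$ of $\psi_{lm}$, the function $g$ has no knots, because $m$ is odd, so $g$ is smooth there with $-g''+\theta^2g=0$. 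Integrating by parts,
\[
\langle \psi_{lm},g\rangle=\frac{1}{2\theta}\int\psi_{lm}\,(-g''+\theta^2 g)\,dx+\big[\text{terms }\psi_{lm}g'\text{ at the ends of the support}\big].
\]
Both pieces vanish, since the integrand is zero and $\psi_{lm}$ vanishes at the ends of its support and at $0,1$. Two functions of the same level have disjoint interiors of support, so their inner product is $0$.

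\textbf{Normalization.} For $\psi_{lm}$ itself, the same integration by parts leaves only the jump of $\psi_{lm}'$ at the peak $m2^{-l}$. Writing $h=2^{-l}$ and $c^2=2/\sinh(2\theta h)$, this gives
\[
\|\psi_{lm}\|^2=\frac{1}{2\theta}\,\psi_{lm}(mh)\cdot 2\theta c\cosh(\theta h)=c^2\sinh(\theta h)\cosh(\theta h)=\frac{c^2\sinh(2\theta h)}{2}=1.
\]
For the top level, the reproducing property gives $\|K(\cdot,0)\pm K(\cdot,1)\|^2=2(1\pm e^{-\theta})$. The cross term $\langle K_0+K_1,K_0-K_1\rangle=K(0,0)-K(1,1)=0$, so $\psi_{01}\perp\psi_{02}$. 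This matches the stated normalizations and completes the argument.

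The main obstacle is establishing the explicit RKHS inner product formula, including the boundary terms, because all the orthogonality computations rest on it. Alternatively, one can avoid the formula and compute inner products directly. Express each $\psi_{lm}$ as a combination of the three kernel sections $K(\cdot,u)$ at its knots, using the Markov property of the Laplace kernel, which makes these coefficients local. Then evaluate $A^{\top}K(\Uv,\Uv)A$ by the reproducing property, i.e., as $\sum_j a_j g(u_j)$. This route uses the fact that $\sum_j a_j K(\cdot,u_j)$ is exactly $\psi_{lm}$.
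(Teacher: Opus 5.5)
Your reduction to exhibiting an orthonormal basis of $V$ is the same as the paper's. Your orthogonality argument is correct: no coarser knot lies inside the support of $\psi_{lm}$, and you then integrate by parts against the Ornstein--Uhlenbeck norm. Both normalization computations are also correct.

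The step that fails is the characterization of $V$ in the membership paragraph.
\begin{itemize}
\item The conditions $f'(0)=\theta f(0)$ and $f'(1)=-\theta f(1)$ hold only for kernel sections at interior grid points. $K(\cdot,0)|_{[0,1]}=e^{-\theta x}$ has $f'(0)=-\theta f(0)$, and $K(\cdot,1)|_{[0,1]}=e^{-\theta(1-x)}$ has $f'(1)=\theta f(1)$. In your own inner product, it is exactly the boundary term at $0$ that lets $K(\cdot,0)$ reproduce $f(0)$.
\item The space cut out by these two conditions does not have dimension $M$. Nodal values determine the function, but they are not free. The condition at $0$ forces $f\propto e^{\theta x}$ on the first cell, i.e.\ $f(2^{-L})=e^{\theta 2^{-L}}f(0)$, and likewise at $1$. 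So that space has dimension $M-2$ and is just the span of the interior sections.
\item The functions $\psi_{l1}$ and $\psi_{l,2^l-1}$ do not vanish near the endpoints. Near $0$, $\psi_{l1}(x)=c\sinh(\theta x)$, so $\psi_{l1}'(0)=c\theta\neq 0=\theta\psi_{l1}(0)$.
\end{itemize}
These errors happen to cancel in the conclusion, but as written the paragraph does not establish $\psi_{lm}\in V$. It even contradicts your correct remark that $\psi_{01},\psi_{02}\in V$, since $K(\cdot,0)\pm K(\cdot,1)$ violate the stated conditions.

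The repair is to drop the boundary conditions. Let $V'$ be the set of continuous functions on $[0,1]$ that lie in $\mathrm{span}\{e^{\theta x},e^{-\theta x}\}$ on each cell $[k2^{-L},(k+1)2^{-L}]$. Since $\sinh(\theta 2^{-L})\neq 0$, each pair of endpoint values on a cell is matched by exactly one such function. Hence the nodal values give a linear bijection $V'\to\mathbb{R}^M$. $V'$ contains the $M$ independent kernel sections, so $V'=V$, and every $\psi_{lm}$ lies in it.

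Alternatively, as in the paper and in your closing sketch, write $\psi_{lm}=AK(\cdot,a)+BK(\cdot,b)+CK(\cdot,c)$ at its three knots. This puts it in $V$ by construction, and the Markov factorization $K(x,y)=p(x\wedge y)q(x\vee y)$ shows that vanishing at $a$ and $c$ forces vanishing outside $[a,c]$.

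With this fix, your main route is a valid alternative to the paper's. The paper never writes down the RKHS norm. It gets orthogonality from $\langle\phi_{a,b,c},f\rangle_{\mathcal{N}}=Af(a)+Bf(b)+Cf(c)$ together with dyadic nesting: the finer function vanishes at the three knots of the coarser one. That argument applies to any Gauss--Markov kernel, and the Laplace kernel enters only through solving a $3\times 3$ system. Your route needs the explicit Sobolev-type norm, boundary terms included, so it is tied to the Laplace kernel. In exchange, orthogonality becomes a one-line Green's-identity argument and normalization reduces to reading off the derivative jump at the peak.
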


\begin{figure}[ht]
    \centering
    \includegraphics[width=\linewidth]{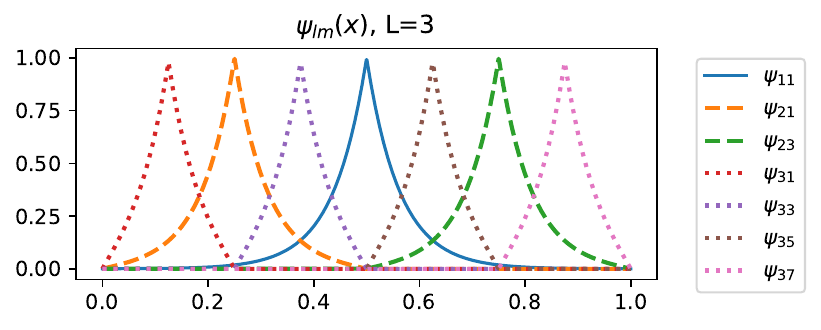}
    \caption{An example of basis functions $(L=3)$.}
    \label{fig:psi-plot}
    \vspace{-0.2cm}
\end{figure}

Theorem \ref{Th:sparse} implies that $\phi(x)=\{ \psi_{lm}(x) \}\in \mathbb{R}^{1\times M}$ is a sparse vector because each $\psi_{lm}$ is supported on a local region $[(m-1)2^{-l},(m+1)2^{-l}]$ whenever $l\geq 1$. An example of $\psi_{lm}(x)$ with $L=3$ is illustrated in \Cref{fig:psi-plot}.
% \Cref{fig:SIKA-GP} illustrates the architecture of SIKA-GP, which is computationally efficient after sparsifying $\phi$ and $\wv$.

\begin{remark}[Laplace kernel]
The Laplace kernel is a more constrained choice that lacks certain features offered by other, more general kernel classes, for example, the ability to control smoothness. However, there is a significant gain on the ${O}(\log M)$ scalability because of the Markov property of the Laplace kernel, while other kernels do not. The expressiveness of the Laplace kernel can be enhanced by incorporating modern deep learning, such as DGP and DKL.
\end{remark}

\begin{remark}[Inducing grids]
Inducing points $\Uv$ are located uniformly on a predetermined grid rather than being learned adaptively. SIKA-GP allows us to use a much denser grid without sacrificing efficiency: although $M$ basis weights are learned, only ${O}(\log M)$ basis functions are activated per forward pass. The fixed inducing grid avoids repeated kernel matrix inversion, yielding more stable and efficient optimization with analytic basis activation.
\end{remark}

\begin{remark}[Extension to deep architecture]
To enhance expressive power and nonstationary behaviors, SIKA-GP can be naturally extended to deep feature learning by stacking multiple layers of SIKA-GP as shown in \eqref{eqn:dgps}. This hierarchical representation allows us to rewrite a SIKA-DGP as a BNN with additive SIKA-GP blocks.
\end{remark}

\begin{figure*}[t]
    \centering
    \includegraphics[width=0.88\linewidth]{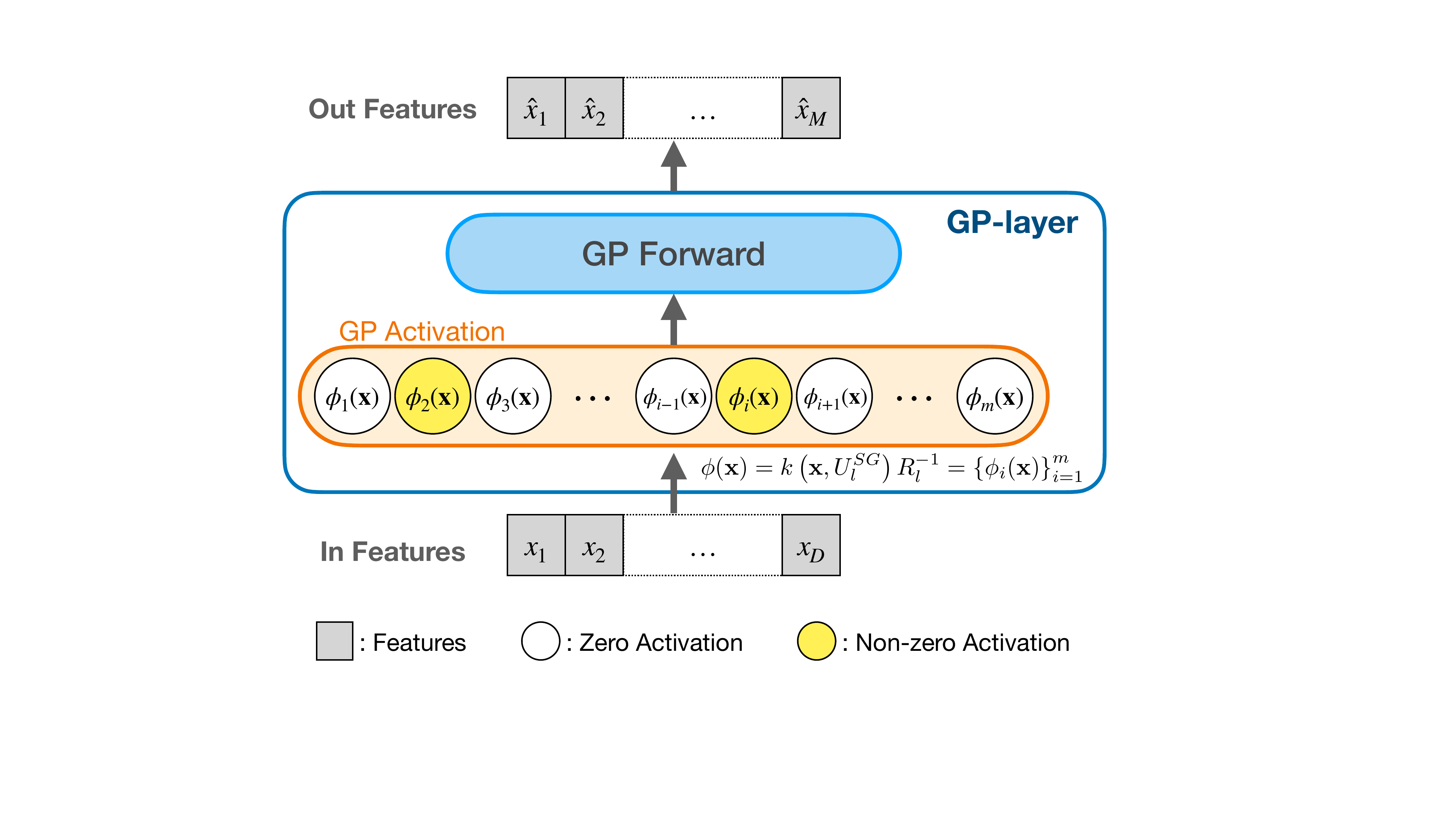}
    \caption{SIKA-GP replaces dense GP inference with sparse inducing kernel approximations. A sparse feature mapping selects a small subset of activated basis functions $\psi(x)$ per input, which are processed by Bayesian feed-forward layers and aggregated additively, enabling efficient inducing kernel inference with near-linear complexity.}
    \label{fig:SIKA-GP}
    \vspace{-0.2cm}
\end{figure*}

\subsection{BNNs with additive SIKA-GP blocks}

Given the input $\xv\in\Rb^{D}$, the output of the additive SIKA-GP blocks is 
\begin{align}
    f(\xv)& =\sum_{i=1}^{D} \phi(x_i)\wv_i+b =\Phi(\xv) \Wv + b, \label{eqn:BNN-sika}
\end{align}
where $\Phi(\xv)$ contains a total of $DM$ basis functions when $M$ inducing points are chosen for each base GP, and $\Wv$ consists of $DM$-dimensional random weights associated with the corresponding basis functions. %In other words, each additive component BNN (GP) has $M$ activation units, and $M$ random multiplicative weights. 
We place a random bias $b$ to make the structure consistent with vanilla BNN representations.

%For generic kernels, the corresponding component BNN has $M$ activation units, and the random weight $\Wv$ is $D\cdot M$-dimensional. However, SIKA-GP in Section \ref{sec:SIKA} can produce significant computation reduction. 
\begin{remark}
Compared with \cref{BNN} with generic kernels, 
% where each component requires $M$ activation units and $DM$-dimensional random weight $\Wv$, 
SIKA-GP leads to significant computation reduction. Since the basis functions $\{ \psi_{lm}(x) \}$ have local supports, the activation function has a maximum $(L+2)$ active units, and the effective random weights $\Wv^*$ are only $D(L+2)$-dimensional, i.e., $(L+2)$-dimensional for each base GP. 
For a fixed dyadic inducing set, the sparse basis is an orthonormal reparameterization of the same inducing-kernel approximation rather than an additional approximation on top of it; the computational gain comes from evaluating this finite-dimensional span in sparse coordinates.
\Cref{fig:SIKA-GP} illustrates the architecture of SIKA-GP and the sparsification effect.
\end{remark}

\subsection{Variational inference}

Given a training set $\mathcal{D}$, Bayesian learning estimates the predictive uncertainty $P(y^*|x^*,\mathcal{D})$, which is the marginalization of the posterior distribution $P(\bm{z}|\mathcal{D})$:
\begin{align}
    P(y^*|x^*,\mathcal{D}) = \int P\left( y^{\ast}|x^{\ast},\bm{z} \right) P\left( \bm{z} |\mathcal{D} \right) d\bm{z},
\end{align}
where $\bm{z}$ denotes unknown Bayesian variables, e.g., $\bm{z}:=\{\Wv,b\}$ in SIKA-GP.

%\textcolor{blue}{SIKA-GP produces desired sparsity in $\{ \psi_{lm}(x) \}$ as well as independent random weights $\Wv$ in the approximate Gaussian process prior. To retain the same desired properties during inference, we adopt the following variational inference approach.}

%In BDL applications, 
Using integration to compute the exact posterior $P(\bm{z}|\mathcal{D})$ is intractable in practice, and the posterior also may not be Gaussian. The BNN translation of the sparse GP in \eqref{eqn:BNN-sika} provides us with an approximation of the underlying GP prior. To retain the same desired sparse properties in inference, we adopt variational inference (VI) to approximate the exact posterior $P(\Wv,b|\mathcal{D})$ by a variational distribution $q(\Wv,b)$. In particular, we select a variational family of mean-field Gaussian distributions on random weights $\Wv$ and bias $b$:
\begin{align}
    q(\Wv)=\Nc(\mv_{\Wv},\text{diag}(\mathbf{s}_{\Wv})), ~q(b)=\Nc(m_b, \sigma_b),
\end{align}
where $\mv_{\Wv}$ denotes the variational mean with width $DM$, and $\text{diag}(\mathbf{s}_{\Wv})$ represents the variational covariance restricted to the diagonal entries given by $\mathbf{s}_{\Wv}$. By mean-field assumption, $q(\bm{z})=q(\Wv)q(b)$. The variational posterior is then optimized by maximizing the evidence lower bound (ELBO):
\begin{align}
    \text{ELBO}=\mathbb{E}_{q(\bm{z})} \left[ \log P\left( y|\xv, \bm{z} \right) \right] -\text{KL} \left[ q(\bm{z})\| p(\bm{z}) \right].
\end{align}

\section{Tensor-based Sparse Indexing for Accelerated Inference of SIKA-GP} \label{sec:acc-inference}

%\textcolor{red}{I changed the title of the section to emphasize that this section is mainly related to tensor-based acceleration}

In this section, we present the details of the accelerated inference of SIKA-GP with the Laplace kernel. The explicit representation in Theorem \ref{Th:sparse} fully determines the sparsity pattern of $\phi(x)$ for each given $x$. In addition to the globally activated basis $\psi_{01}$ and $\psi_{02}$, there is at most one activated basis $\psi_{lm}$ for each $l=1, \ldots, L$. Therefore, only a small subset of $\Wv$ is necessary to compute $\Phi(\xv)\Wv$. To facilitate efficient inference, we design the computation pipelines using a tensor representation.

\subsection{Sparse indexing of activated tensors}

% The number of activated neurons in $\phi(x)$ can be reduced to $L+2$ if $M=2^L+1$ evenly spaced inducing points, $\Uv:=\{0\cdot 2^{-L},1\cdot2^{-L},\ldots,2^{L}\cdot2^{-L}\}$, are used in a \textit{dyadic} order. Next, we propose a new framework to efficiently sort the activated $\phi(x)$ such that the sampling and matrix multiplication can be used on a \textit{lightweight} $\bm W^*\in \mathbb{R}^{DL}$ in support of GPU parallelization. 

To make effective use of this sparse structure efficiently in GPU, we define the inducing points and $\psi_{lm}$ in a dyadic order such that the index of activated $\Wv$, when multiplied by the sparsely activated $\Phi(\xv)$, can be determined in a parallelized manner. We assume that the data input is stored as a tensor $\Xv\in \mathbb{R}^{B\times D}$ in a mini-batch of size $B$.

\textbf{Dyadic inducing points}: Define the sets of dyadic points $\Uv_l$ with increasing order $l=1,\ldots,L$ as a tensor
\begin{align}
    \Uv_l:=\left[ m\cdot 2^{-l}:m\in \{ 1,3,5,\ldots ,2^{l}-1\} \right].
\end{align}
Each $\Uv_l$ consists of $2^{l-1}$ inducing points, and the entire set of inducing points is
\begin{align}
    & \Uv^{[L]}:=\left[\Uv_{0},\Uv_1,\ldots,\Uv_L\right], 
    %\\  
    %& \Uv_{i}\bigcap \Uv_{j}=\emptyset, ~\text{if } i\neq j,
\end{align}
where $\Uv_0:=[0,1]$ denotes the boundary points. The \textit{dyadic} $\phi(x)$ is therefore given by
\begin{align}
    & \phi(x)=[\psi_{01},\psi_{02}, \bm\psi_1, \bm\psi_2 \ldots,\bm\psi_L], \\ 
    & \bm\psi_l:=[ \psi_{l1}, \psi_{l3}, \ldots, \psi_{l(2^l-1)} ], ~l=1,\ldots,L.
\end{align}
For example, the set of level-$2$ dyadic ordered inducing points is given by $\Uv^{[2]}=\left[ 0,1, \frac{1}{2} ,\frac{1}{4} ,\frac{3}{4} \right]$.

\begin{figure}[t]
    \centering
    \includegraphics[width=0.85\linewidth]{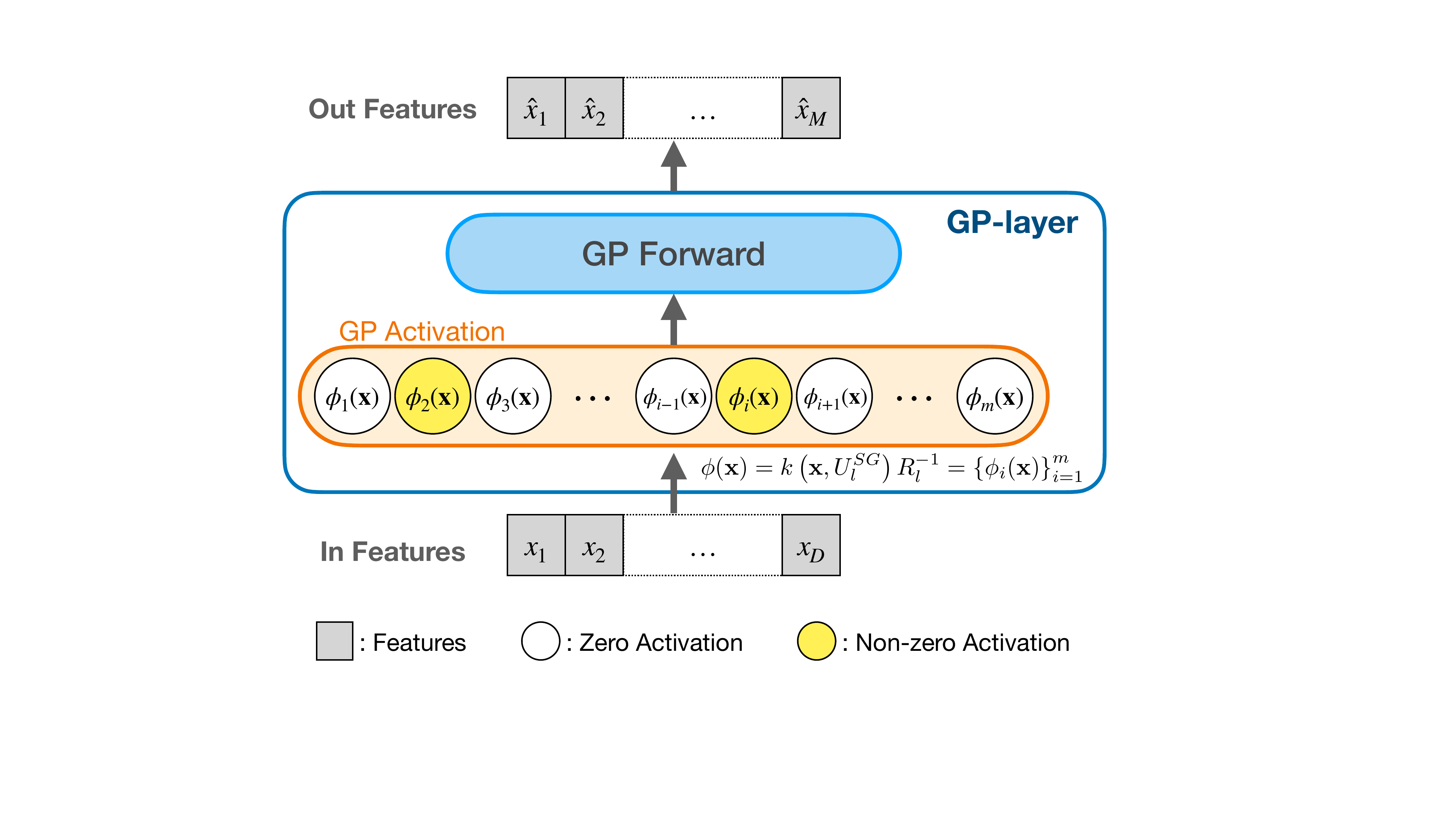}
    \caption{An example of Sparse Indexing $(L=3)$.}
    \label{fig:dyasort}
    \vspace{-2mm}
\end{figure}

\begin{corollary}\label{cor:dyadic}
The \textit{only} activated function $\psi_{lm}$ in $\bm\psi_l$ is associated with the closest point $u_{lm}:=m\cdot2^{-l}\in \Uv_l$ to the input $x$ for $l=1,\ldots,L$.
\end{corollary}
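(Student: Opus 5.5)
The corollary follows directly from the explicit supports in Theorem~\ref{Th:sparse}. I will fix a level $l\in\{1,\ldots,L\}$ and an input $x\in[0,1]$, and show two things: at most one $\psi_{lm}$ in $\bm\psi_l$ is nonzero at $x$, and when one is nonzero its index $m$ is the one for which $u_{lm}=m2^{-l}$ is the point of $\Uv_l$ nearest to $x$.

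\textbf{Step 1 (the supports tile $[0,1]$).} By Theorem~\ref{Th:sparse}, $\psi_{lm}$ vanishes outside $I_{lm}:=[(m-1)2^{-l},(m+1)2^{-l}]$, for odd $m=1,3,\ldots,2^l-1$. Consecutive odd indices differ by $2$, so these intervals have length $2^{1-l}$ and meet only at their endpoints $(m+1)2^{-l}$, which are even multiples of $2^{-l}$. Their union is $[0,1]$.

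\textbf{Step 2 (uniqueness of the activated function).} Take $x$ in the interior of some $I_{lm}$. Then every other $\psi_{lm'}$ vanishes at $x$ by Step~1. On the interior, $\psi_{lm}(x)$ is a positive multiple of $\sinh$ of a strictly positive argument, so it is nonzero. At a shared endpoint $x=k2^{-l}$ with $k$ even, both adjacent pieces equal $\sinh(0)=0$. Hence every function in $\bm\psi_l$ vanishes there, and the claim holds vacuously; equivalently, either neighbour can be taken with zero activation. I expect this endpoint bookkeeping to be the only place needing care, so I will state it explicitly.

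\textbf{Step 3 (nearest point).} For $x\in I_{lm}$ we have $|x-m2^{-l}|\le 2^{-l}$. Any other odd $m'$ satisfies $|m'-m|\ge 2$. By the triangle inequality,
\[
|x-m'2^{-l}|\ge |m'-m|2^{-l}-|x-m2^{-l}|\ge 2^{-l}\ge |x-m2^{-l}|,
\]
with strict inequality away from the endpoints. So $u_{lm}$ is the closest point of $\Uv_l$ to $x$, and the activated function is $\psi_{lm}$. This also gives the index formula $m=2\lfloor x2^{l-1}\rfloor+1$ used for parallel indexing, with the clamp $m=2^l-1$ when $x=1$.
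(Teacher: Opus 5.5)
Your proposal is correct and follows essentially the same route as the paper. Both arguments rest on $\psi_{lm}$ being supported on $[(m-1)2^{-l},(m+1)2^{-l}]$, and the paper's contrapositive (a strictly closer point $(m\pm2)2^{-l}$ forces $x$ outside the support) is your triangle-inequality step read backwards. Your explicit tiling and uniqueness argument, together with your handling of the even dyadic points (where ties occur and every function in $\bm\psi_l$ vanishes), spells out what the paper leaves implicit.
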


\Cref{Th:sparse} and \Cref{cor:dyadic} indicate that the activated $\phi(x)\in \Rb^{M}$ can be computed in a closed form after indexing the closest $u_{lm}$ in each order $l=1,2,\ldots,L$. An example is given in \Cref{fig:dyasort}. This indexing procedure can be performed in parallel using the following Tensorized Sparse Indexing (TSI) algorithm.

\textbf{Tensorized Sparse Indexing}: Given the input stored as a mini-batch tensor $\Xv\in \Rb^{B\times D}$, where $B$ is the batch size and $D$ is the feature dimension, define a length-$L$ tensor vector of the powers of $2$ with increasing orders
\begin{align}
    \mathbf{r}:= [2^1, 2^2, \ldots, 2^L] \in \mathbb{\Nb}^{1\times L}.
\end{align}
Given the input $\Xv$, $\phi(\Xv)$ is a tensor matrix of dimension $(B\times D\times M)$.
In SIKA-GP, the activated indices associated with $[\psi_1,\ldots,\psi_L]$ are
\begin{align}
    \mathbf{t} = \left\lfloor \frac{\left\lceil \Xv\odot\mathbf{r} \right\rceil +1}{2} \right\rfloor \in \mathbb{\Nb}^{B\times D\times L},
\end{align}
where $\Xv\odot\mathbf{r}$ denotes the element-wise multiplication that broadcasts $x\odot\mathbf{r}$ throughout the whole batch dimension $B$ and the feature dimension $D$.

The activated tensor indices in $\phi(\Xv)$ are, therefore, obtained by offset $\mathbf{t}$ and concatenated with the first two indices (two globally activated $\psi_{01}, \psi_{02}$)
\begin{align}
    J(\Xv) = \left[\bm{1}, ~\bm{2}, ~\mathbf{t} +
\underbrace{(\mathbf{r} / 2 + 1)}_{\text{offset from $\psi$ to $\phi$}} \right].
\end{align} 

Since only indices of $\Wv$ that correspond to activated $\psi_{lm}$ are necessary for $\Phi(\Xv)\Wv$, $J(\Xv)$ is used to squeeze the entire weight matrix $\Wv$ as
\begin{align}
    \Wv^* :=\Wv[J(\Xv)] \in \Rb^{B\times D(L+2)}.
\end{align}

Note that all the operations in TSI support batch-wise parallelization, which enables efficient and GPU-friendly inference. This design enables a \textit{lightweight} forward propagation and scalable training by reducing redundant kernel evaluations while maintaining expressive capacity. A parallelized forward process using TSI is illustrated in \Cref{fig:lightmatmul}.

%\textcolor{red}{This seems to be taking only one Bayesian sample $\Wv$? Shouldn't we use multiple $\Wv$'s, so another dimension say $K$ to indicate how many Bayesian samples?}

\begin{figure}[t]
    \centering
    \includegraphics[width=0.95\linewidth]{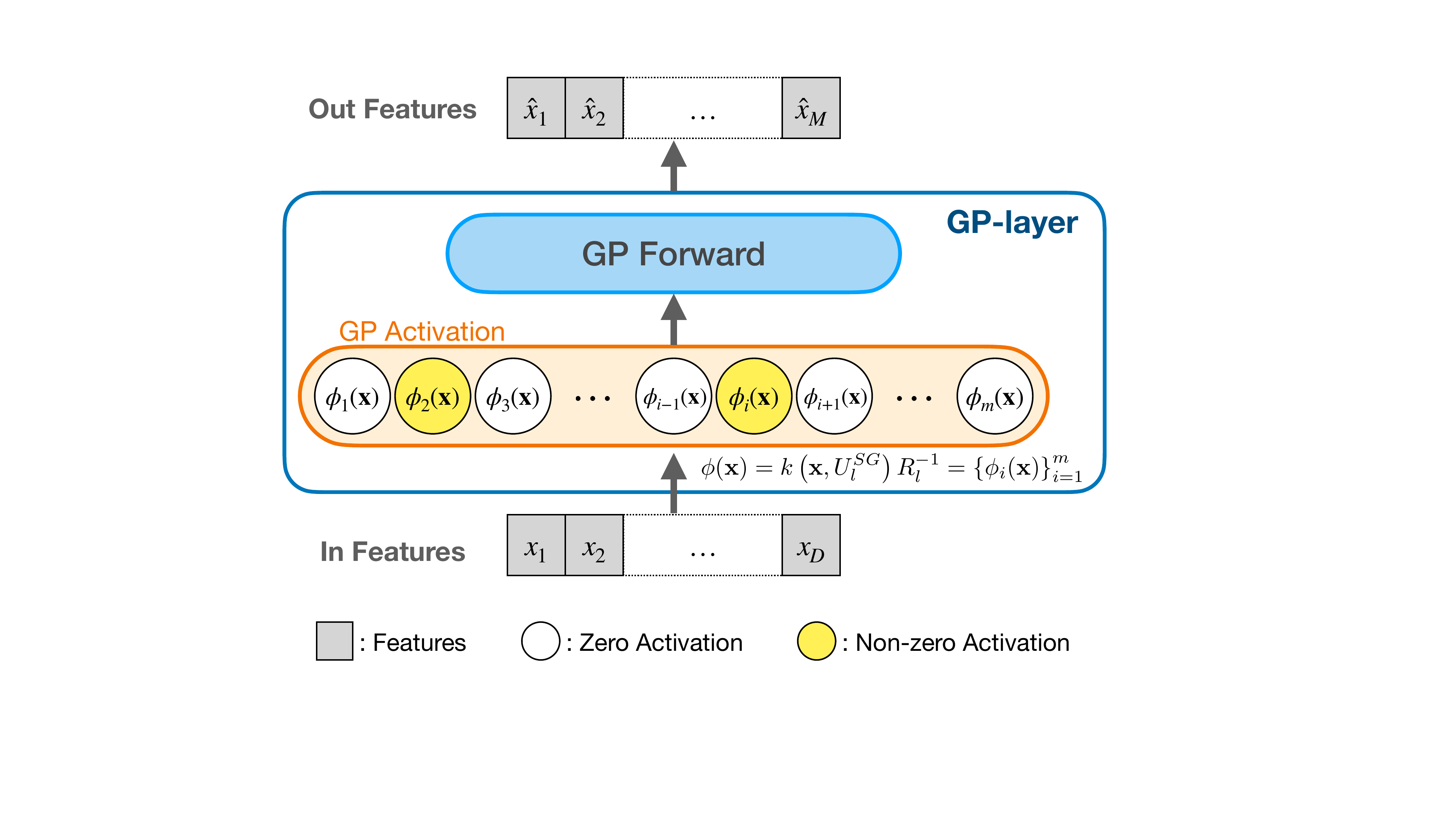}
    \caption{Parallelized lightweight forward process of SIKA-GP using TSI algorithm. ``$\ast$'' denotes the inner product (i.e., convolution linear layer) on the last two dimensions $[D,M]$.}
    \label{fig:lightmatmul}
    \vspace{-2mm}
\end{figure}

Instead of sampling the entire vector $\Wv\in\Rb^{DM}$, we only need to sample $\Wv^*\in \Rb^{D(L+2)}$ as the activated rows of $\Wv$ are deterministically associated with the activated $\psi_{lm}$. By the mean-field VI, the variational distribution of $\Wv^*$ can be efficiently obtained as:
\begin{align}
    & q(\Wv^*) = \Nc(\mv^*_{\Wv},\text{diag}(\mathbf{s}^*_{\Wv})), 
\end{align}
where $\mv^*_{\Wv}=\mv_{\Wv}[J(\xv)], ~\mathbf{s}^*_{\Wv}= \mathbf{s}_{\Wv}[J(\xv)]$.

\begin{figure*}[ht]
    \centering
    \subfloat[($B,S,D$)=($16, 10, 128$). \label{fig:cpu-b16-in128-s10}]{\includegraphics[width=.25\linewidth]{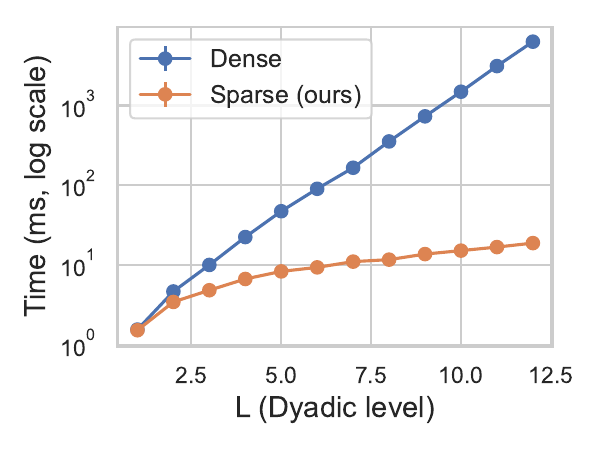}}
    \subfloat[($B,S,D$)=($16, 100, 128$). \label{fig:cpu-b16-in128-s100}]{\includegraphics[width=.25\linewidth]{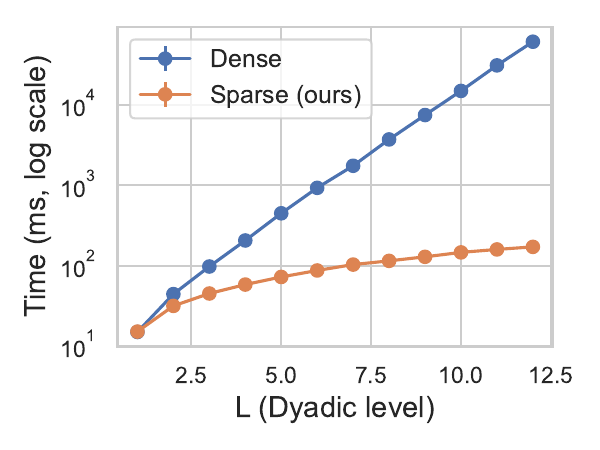}}
    \subfloat[($B,S,D$)=($128, 10, 128$). \label{fig:cpu-b128-in128-s10}]{\includegraphics[width=.25\linewidth]{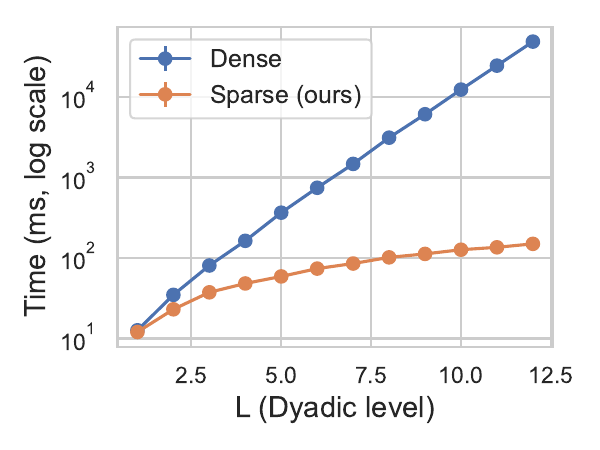}}
    \subfloat[($B,S,D$)=($16, 10, 512$). \label{fig:cpu-b16-in512-s10}]{\includegraphics[width=.25\linewidth]{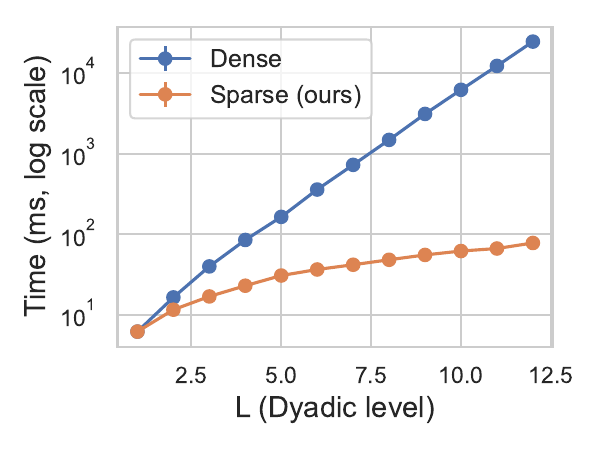}}
    \caption{\textbf{CPU} inference time of SIKA-GP. $B$: batch size; $S$: the number of MC samples; $D$: the dimension of features.}
    \label{fig:time-cpu}
   \vspace{-0.2cm}
\end{figure*}

\begin{figure*}[ht]
    \centering
    \subfloat[($B,S,D$)=($16, 10, 128$). \label{fig:cuda-b16-in128-s10}]{\includegraphics[width=.25\linewidth]{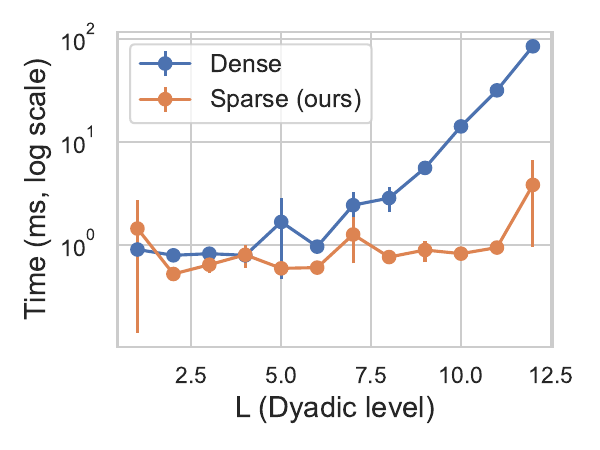}}
    \subfloat[($B,S,D$)=($16, 10, 512$). \label{fig:cuda-b16-in512-s10}]{\includegraphics[width=.25\linewidth]{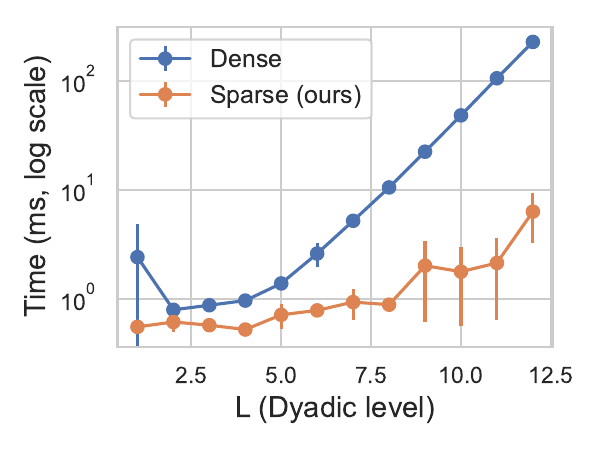}}
    \subfloat[($B,S,D$)=($16, 100, 512$). \label{fig:cuda-b16-in512-s100}]{\includegraphics[width=.25\linewidth]{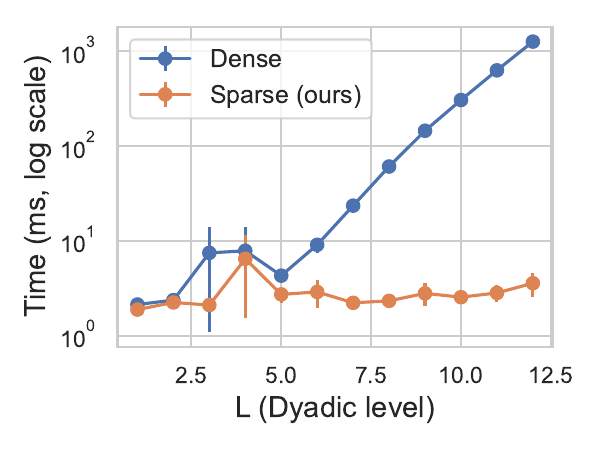}}
    \subfloat[($B,S,D$)=($128, 10, 512$). \label{fig:cuda-b128-in512-s10}]{\includegraphics[width=.25\linewidth]{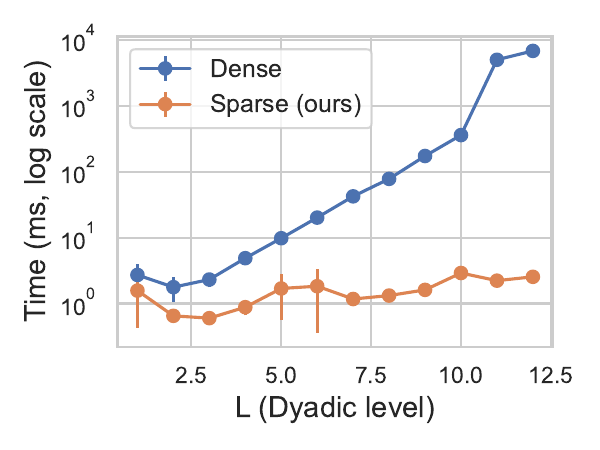}}
    \caption{\textbf{CUDA} inference time of SIKA-GP. $B$: batch size; $S$: the number of MC samples; $D$: the dimension of features.}
    \label{fig:time-cuda}
   \vspace{-0.2cm}
\end{figure*}

During training, we optimize the ELBO on mini-batches of $\Xv$ with a total of $S$ random $\Wv^*$ samples:
\begin{align}
\Lc(\theta)&= -\frac{1}{S} \sum_{s=1}^{S} \log P\left( \yv|\Xv, \Wv^*_s,b_s \right) \notag\\
    &\qquad+\text{KL}\left[ q(\Wv,b) \| p(\Wv,b) \right],\\
% &\Wv^*_s \sim q(\Wv^*), ~b_s\sim q(b), ~s=1,\ldots,S. \\
\Wv^*_s & = \mv^*_{\Wv} + \mathbf{s}^*_{\Wv} \odot \bm{\epsilon}_s, ~\bm{\epsilon}_s \sim \Nc(\bm{0}, \bm{I}_{L+2}), \\
b_s & = m_b + \sigma_b \cdot \epsilon_s, ~\epsilon_s\sim \Nc(0,1).
\end{align}
During inference, we predict for a new $x^*$ by similarly drawing $S^*$ samples from the learned variational distribution:
\begin{align*}
    \Eb_{q(\Wv^*,b)}\left[ P(y^*|x^*, \Wv^*, b) \right] \approx \frac{1}{S^*} \sum_{s=1}^{S^*} P\left( y^{\ast}|x^{\ast}, \Wv^*_s,b_s \right). %\nonumber \\   
\end{align*}
\Cref{fig:lightmatmul} illustrates the accelerated inference, which reduces the cost of sampling and multiplication from ${O}(M)$ to ${O}(\log M)$ with $M=2^L+1$ inducing points.

\textbf{Scalability}.
We summarize the computational complexity of our proposed SIKA-GP compared to other approximated GPs in \Cref{tab:complexity}. To make a fair comparison, the number of inducing points in SVGP and KISS-GP are also chosen to be $M=2^L+1$. We denote the number of data points to be computed as $N$ and the number of samples to be drawn as $S$ in both training and inference. However, for more complex or multitasking GPs, it may be necessary to use a larger number of inducing points in both SVGP and KISS-GP, since this quantity is tied to the dimensionality of the GP inputs. Conversely, as the dataset becomes more intricate and higher-dimensional, it is essential to increase the number of inducing points and Monte Carlo samples.

\begin{table}[ht]
    \caption{Computational complexity of different one-layer GPs. $N$: the number data points; $M$: the number of inducing points; $S$: the number of MC samples used for training/inference; $D$: the dimension of data features.}
    \centering
    \resizebox{0.9\columnwidth}{!}{
    \begin{tabular}{lcc}
    \toprule[1pt]
      & \textbf{Training}  & \textbf{Inference}     \\
    \midrule[0.5pt]
    SVGP    & ${O}( SDNM+ NM^2 + M^3)$  & ${O}(SNM^2)$     \\
    KISS-GP  & ${O}(SDNM + D M^{\frac{3}{D}})$ & ${O}(SD M^{1+\frac{1}{D}})$   \\
    DAK  & ${O}(SDNM)$  & ${O}(SDM)$       \\
    SIKA-GP (ours) & ${O}(SDN\log M+DM)$  & ${O}(SD\log M)$    \\
    \bottomrule[1pt]
    \end{tabular}
    }
    \label{tab:complexity}
   \vspace{-0.2cm}
\end{table}

\section{Experiments}

We evaluate the proposed SIKA-GP on multiple synthetic benchmarks and real datasets. We first analyze the time complexity of SIKA-GP and then evaluate SIKA-GP in multiple BDL applications, including real datasets for regression, vision, and language models.\footnote{Code for SIKA-GP: \href{https://github.com/warrenzha/sika-gp}{github.com/warrenzha/sika-gp}.}

\subsection{Time analysis}

Since SIKA-GP can be recast as a BNN layer with sparsely activated kernel activation $\phi(x)=K(x,\Uv)\Pv_{\Uv}^{-\top}$ and Bayesian random weight $\wv$, we benchmark the inference time of SIKA-GP compared to a one-layer BNN with the same width of weight. 

We compare the inference time of Dense BNN versus Sparse (ours) indexing of SIKA-GP at varying dyadic levels $L$, with different batch sizes $B$, sequence length $S$ and feature dimension $D$. In both CPU and CUDA benchmarks, the proposed Sparse method consistently demonstrates appreciable reductions in inference time, especially as the dyadic level $L$ increases.

In \Cref{fig:time-cpu}, the CPU time for Dense inference grows exponentially with $L$, achieving a cost of several orders of magnitude higher than that of $L>5$. In contrast, our Sparse inference increases much more slowly, remaining nearly linear in $L$. Across different settings of $(B,S,D)$, our sparse inference consistently outperforms the Dense baseline, with relative gains more pronounced for larger $D$ and higher $L$.

In \Cref{fig:time-cuda}, we show the impact of GPU parallelization on Dense and Sparse inference. Although both methods benefit from CUDA, Dense inference still suffers from exponential scaling with $L$. However, Sparse inference remains nearly linear and less than $10$ ms, demonstrating that the approximation significantly improves GPU utilization. Even in large batch and dimensional setups (e.g. $B\!=\!128, D\!=\!512$), the Sparse approach maintains a millisecond-level inference time, whereas Dense inference rapidly grows to hundreds or thousands of milliseconds. 

% The proposed method shows slightly higher variance at small dyadic levels due to CUDA synchronization, but the absolute time remains negligible compared to the Dense baseline. \textcolor{red}{The last sentence is confusing. Please revise.}

% \textcolor{red}{The next paragraph seems repetitive. We can probably remove it.}
% The Dense inference scales poorly with the dyadic level, while the Sparse Inference leverages the sparse-grid structure to achieve near-linear scaling. The proposed approach demonstrates robustness to large batch sizes and dimensions, making it viable for applications in high-dimensional Bayesian deep learning.

% \begin{wrapfigure}{l}{0.5\linewidth}
%     \centering
%     \includegraphics[width=\linewidth]{aistats26/figs/time_analysis_layers.pdf}
%     \caption{CUDA inference time of deep SIKA-GP.}
%     \label{fig:time-cuda-dgp}
%     \vspace{-0.3cm}
% \end{wrapfigure}

\begin{figure}[h]
    \centering
    \includegraphics[width=.6\linewidth]{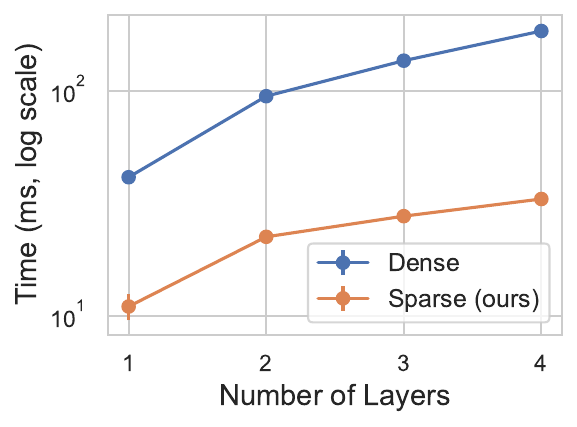}
    \caption{CUDA inference time of deep SIKA-GP.}
    \label{fig:time-cuda-dgp}
    \vspace{-2mm}
\end{figure}

\Cref{fig:time-cuda-dgp} shows the inference time of the deep SIKA-GP with multiple layers. We set $(B,D,S)\!=\!(64, 128, 10)$ with $M=129$ inducing points. The dense baseline grows rapidly with depth, exceeding 100 ms by more than 2 layers, while our sparse approximation scales much more gracefully and remains below 30 ms. The growing gap shows the appropriateness of our proposed sparse inference for deep and hierarchical Bayesian architectures.

\subsection{DGP in UCI Regression}
\label{subsec:uci-reg}

We evaluate SIKA-GP on UCI regression datasets using a DGP with 2 hidden layers of SIKA-GP. The dyadic level of each layer is set to $L=7$. The training employs the Adam optimizer for 100 epochs, using $10$ MC samples, and a batch size of 512. 
% The training employs the Adam optimizer for 100 epochs, maintaining a learning rate of $0.001$, a weight decay parameter of $0.0005$, $10$ MC samples, and a batch size of 512. 
We refer to DGP with our proposed SIKA-GP method as DGP-SIKA, while we refer to the dense DGP with no acceleration as DGP. A detailed training recipe is provided in \cref{subsec:regression supp}.

\begin{table}[ht]
    \caption{Predictive performance and runtime on UCI regression benchmarks. DGP-SIKA achieves comparable predictive accuracy while reducing time cost across datasets.}
%    \vspace{-0.2cm}
    \label{tab:uci}
    \centering
\resizebox{0.95\linewidth}{!}{
\begin{tabular}{llccc}
\toprule
Datasets                    & Models & RMSE $\downarrow$ & NLPD $\downarrow$ & Time(s) $\downarrow$ \\
\midrule
\multirow{2}{*}{Gas}        & DGP    & 0.54$\pm$0.01    & 1.09$\pm$0.12     & 19.43$\pm$1.66         \\
                            & DGP-SIKA & 0.53$\pm$0.03    & 1.07$\pm$0.11     & 2.79$\pm$0.04         \\
\midrule
% \multirow{2}{*}{Parkinson} & DGP    & 2.13$\pm$0.16    & 2.46$\pm$0.45     & 14.70$\pm$1.04         \\
%                             & DGP-SIKA & 2.11$\pm$0.26   & 2.46$\pm$0.58   & 3.56$\pm$0.09         \\
% \midrule
\multirow{2}{*}{Kin40K}     & DGP    & 0.09$\pm$0.01    & 0.07$\pm$0.01     & 33.25$\pm$0.56         \\
                            & DGP-SIKA & 0.09$\pm$0.01    & 0.07$\pm$0.01     & 15.76$\pm$0.51         \\
\midrule
\multirow{2}{*}{Protein}    & DGP    & 0.67$\pm$0.04    & 0.94$\pm$0.04     & 56.50$\pm$1.14         \\
                            & DGP-SIKA & 0.68$\pm$0.04    & 0.94$\pm$0.04     & 28.92$\pm$0.63         \\
\bottomrule
\end{tabular}
}
\vspace{-0.2cm}
\end{table}

On the UCI benchmarks (Gas, Kin40K, and Protein), DGP-SIKA consistently matches the predictive performance of dense DGPs in both RMSE and NLPD. More importantly, DGP-SIKA achieves significant efficiency gains: Runtime per epoch is reduced by up to $7\times$ on Gas and about $2\times$ on Kin40K and Protein. These results show that SIKA-GP preserves predictive accuracy while significantly improving scalability, making DGPs with SIKA more practical for real-world regression tasks.

\subsection{DKL in Image Classification}
\label{subsec:image-class}

\begin{table*}[t]
\caption{Evaluation of accuracy, uncertainty quality, and training/inference efficiency on MNIST, CIFAR-10, and CIFAR-100. Accuracy (\textbf{ACC}) and Expected Calibration Error (\textbf{ECE}) are reported in percentages. Best results are highlighted in \textbf{boldface}.}
\label{tab:unified_results}

\centering
\resizebox{\linewidth}{!}{
\begin{tabular}{llcccccc}
\toprule
Dataset & Model 
& ACC(\%) $\uparrow$ 
& NLL $\downarrow$ 
& ECE(\%) $\downarrow$ 
& ELBO(\%) $\uparrow$ 
& Train Time(s/epoch) $\downarrow$ 
& Infer Time(s) $\downarrow$ \\
\midrule

\multirow{3}{*}{MNIST}
& SVDKL
& $98.17 \pm 0.18$
& $0.09 \pm 0.03$
& $1.87 \pm 0.26$
& $-0.16 \pm 0.03$
& $16.46 \pm 0.30$
& $1.81 \pm 0.14$ \\
& DAK
& $99.06 \pm 0.02$
& $0.03 \pm 0.00$
& $\mathbf{0.50 \pm 0.03}$
& $\mathbf{-0.03 \pm 0.00}$
& $13.73 \pm 0.40$
& $1.42 \pm 0.06$ \\
& DKL-SIKA (ours)
& $\mathbf{99.06 \pm 0.01}$
& $\mathbf{0.03 \pm 0.00}$
& $0.53 \pm 0.03$
& $-0.03 \pm 0.01$
& $\mathbf{10.09 \pm 0.15}$
& $\mathbf{1.00 \pm 0.09}$ \\ [1.5pt]
\midrule

\multirow{3}{*}{CIFAR-10}
& SVDKL
& $94.28 \pm 0.26$
& $\mathbf{0.26 \pm 0.01}$
& $4.10 \pm 0.08$
& $-0.83 \pm 0.01$
& $46.90 \pm 1.61$
& $6.08 \pm 0.51$ \\
& DAK
& $94.53 \pm 0.32$
& $0.28 \pm 0.03$
& $4.34 \pm 0.42$
& $-0.20 \pm 0.04$
& $35.04 \pm 0.49$
& $6.22 \pm 0.21$ \\
& DKL-SIKA (ours)
& $\mathbf{94.78 \pm 0.31}$
& $0.27 \pm 0.02$
& $\mathbf{4.06 \pm 0.23}$
& $\mathbf{-0.20 \pm 0.02}$
& $\mathbf{17.26 \pm 0.28}$
& $\mathbf{4.50 \pm 0.13}$ \\ [1.5pt]
\midrule

\multirow{5}{*}{CIFAR-100}
& ResNet-34
& $75.28 \pm 0.50$
& $1.03 \pm 0.02$
& $8.72 \pm 0.36$
& ---
& $8.76 \pm 0.03$
& $1.98 \pm 0.11$ \\
\cmidrule{2-8}
& DAK
& $76.61 \pm 0.16$
& $\mathbf{1.23 \pm 0.02}$
& $5.55 \pm 0.45$
& $-2.41 \pm 0.03$
& $102.57 \pm 1.95$
& $5.74 \pm 4.78$ \\
& DKL-SIKA (ours)
& $\mathbf{76.94 \pm 0.41}$
& $1.25 \pm 0.04$
& $\mathbf{4.31 \pm 0.33}$
& $\mathbf{-1.23 \pm 0.20}$
& $\mathbf{36.31 \pm 0.58}$
& $\mathbf{4.22 \pm 1.18}$ \\
\cmidrule{2-8}
& SVDKL-FT
& $76.28 \pm 0.18$
& $0.96 \pm 0.01$
& $3.69 \pm 0.20$
& $-7.43 \pm 0.02$
& $41.69 \pm 0.80$
& $13.44 \pm 0.50$ \\
& DAK-FT
& $76.13 \pm 0.10$
& $0.99 \pm 0.05$
& $4.18 \pm 0.33$
& $-9.66 \pm 0.05$
& $35.62 \pm 0.59$
& $9.66 \pm 0.78$ \\
& DKL-SIKA-FT
& $\mathbf{76.61 \pm 0.40}$
& $\mathbf{0.94 \pm 0.01}$
& $\mathbf{3.45 \pm 0.45}$
& $\mathbf{-2.54 \pm 0.37}$
& $\mathbf{20.08 \pm 0.45}$
& $\mathbf{7.07 \pm 1.56}$ \\
\bottomrule
\end{tabular}
}
\end{table*}

\begin{table*}[t]
\caption{OOD performance and calibration on CLINC150. Best results are highlighted in \textbf{boldface}.}
\label{tab:clinc150_ood}
\centering
\resizebox{\linewidth}{!}{
\begin{tabular}{lcccccc}
\toprule
Model 
% & ACC$_{\text{ID}}$(\%) $\uparrow$
% & NLL$_{\text{ID}}$ $\downarrow$
% & ECE$_{\text{ID}}$(\%) $\downarrow$
& ACC$_{\text{OOD}}$(\%) $\uparrow$
& NLL$_{\text{OOD}}$ $\downarrow$
& ECE$_{\text{OOD}}$(\%) $\downarrow$
& AUROC $\uparrow$ 
& AUPRC $\downarrow$
& Time(min/epoch) $\downarrow$ \\
\midrule
% BNN 
% & $92.38 \pm 00.20$
% & $0.75 \pm 0.02$
% & $5.44 \pm 0.33$
% & $83.79 \pm 00.31$
% & $1.88 \pm 0.01$
% & $12.31 \pm 0.30$ \\

SVDKL
% & $95.03 \pm 0.37$
% & $0.26 \pm 0.01$
% & $2.34 \pm 0.30$
& $88.24 \pm 0.24$
& $0.62 \pm 0.01$
& $\mathbf{5.27 \pm 0.28}$ 
& $0.8413 \pm 0.0085$ 
& $0.0228 \pm 0.0082$
& $9.35 \pm 0.46$ \\

DAK
% & $94.65 \pm 0.22$
% & $0.26 \pm 0.01$
% & $2.12 \pm 0.08$
& $87.70 \pm 0.18$
& $0.56 \pm 0.02$
& $5.43 \pm 0.30$ 
& $0.8486 \pm 0.0082$ 
& $0.0197 \pm 0.0014$
& $6.22 \pm 0.15$ \\

DKL-SIKA (ours)
% & $\mathbf{95.03 \pm 0.08}$
% & $\mathbf{0.24 \pm 0.00}$
% & $\mathbf{2.05 \pm 0.21}$
& $\mathbf{88.09 \pm 0.37}$
& $\mathbf{0.54 \pm 0.01}$
& $5.41 \pm 0.22$ 
& $\mathbf{0.8590 \pm 0.0083}$
& $\mathbf{0.0190 \pm 0.0016}$
& $\mathbf{3.41 \pm 0.05}$ \\
\bottomrule
\end{tabular}
}
% \vspace{-0.2cm}
\end{table*}

\begin{table}[t]
\caption{ID performance on CLINC150.}
\label{tab:clinc150}
\centering
\resizebox{\linewidth}{!}{
\begin{tabular}{lccc}
\toprule
Model 
& ACC$_{\text{ID}}$(\%) $\uparrow$
& NLL$_{\text{ID}}$ $\downarrow$
& ECE$_{\text{ID}}$(\%) $\downarrow$ \\
\midrule
% BNN 
% & $92.38 \pm 00.20$
% & $0.75 \pm 0.02$
% & $5.44 \pm 0.33$
% & $83.79 \pm 00.31$
% & $1.88 \pm 0.01$
% & $12.31 \pm 0.30$ \\
SVDKL
& $95.03 \pm 0.37$
& $0.26 \pm 0.01$
& $2.34 \pm 0.30$ \\

DAK
& $94.65 \pm 0.22$
& $0.26 \pm 0.01$
& $2.12 \pm 0.08$ \\

DKL-SIKA (ours)
& $\mathbf{95.03 \pm 0.08}$
& $\mathbf{0.24 \pm 0.00}$
& $\mathbf{2.05 \pm 0.21}$ \\
\bottomrule
\end{tabular}
}
\vskip -0.3cm
\end{table}

To evaluate the application of SIKA-GP in highly structured and high-dimensional image data, such as MNIST \citep{lecun1998mnist}, CIFAR-10/100 \citep{krizhevsky2009learning}, we incorporate SIKA-GP into DKL models, where the input of the GP are high-dimensional features learned by a DNN. 

\textbf{Setup}. Every model incorporates the same DNN head for feature extraction, followed by either a linear output layer for the NN model or the respective last-layer GPs for SVDKL and DAK. MNIST employs a simple CNN with $64$ extracted features; CIFAR-10 employs \texttt{ResNet-18} with $128$ extracted features; CIFAR-100 employs \texttt{ResNet-34} with $512$ output features. In every model, the GP layers incorporate 129 inducing points with a dyadic level of $L=7$. In classification, the outputs of the last layer are followed by a Softmax layer to normalize the output to a probability distribution, and we perform MC sampling with $S=20$ to estimate the expected likelihood in ELBO. We refer to our Sparse Indexing method as DKL-SIKA, while we refer to the original dense DAK as DAK. We report the average results over 3
runs for all experiments, and the best results are highlighted in bold. The detailed training recipe is provided in \cref{subsec:classification supp}.

\textbf{Results}.
Across MNIST, CIFAR-10, and CIFAR-100, DKL-SIKA consistently achieves competitive or improved predictive accuracy and uncertainty calibration while improving computational efficiency. In MNIST, DKL-SIKA reduces the training time to $10.09$ s/epoch, representing a $1.3\times$ speedup over DAK ($13.73$s) and $1.6\times$ faster than SVDKL ($16.46$s). In CIFAR-10, DKL-SIKA achieves the highest accuracy ($94.78$\%) and the lowest calibration error (ECE = $4.06$\%), offering a slight improvement over both DAK and SVDKL. In terms of efficiency, DKL-SIKA reduces the training time to $17.26$ s/epoch, about $2\times$ faster than DAK and $3\times$ than SVDKL. Test-time inference is also faster, with DKL-SIKA ($4.50$ s) outperforming both DAK ($6.22$ s) and SVDKL ($6.08$ s). In the more challenging CIFAR-100 benchmark, DKL-SIKA maintains slightly better accuracy and ECE than DAK, while reducing training and inference costs by more than $3\times$. In settings where standard SVDKL exhibits poor convergence, fine-tuned DKL-SIKA with a pretrained \texttt{ResNet-34} backbone achieves the highest accuracy, lowest NLL, and best ECE among all baselines, while maintaining the best efficiency. 

In general, these results demonstrate that DKL-SIKA maintains strong predictive and uncertainty performance while offering substantial gains in training and inference efficiency. The lightweight forward pass on $\Wv^*$ allows for more efficient and adequate MC sampling to better estimate the ELBO, potentially improving uncertainty quantification.

\subsection{Transformer-based Language Models}

Although GPs have been extensively studied in small- to medium-scale settings, their application to modern transformer-based language models (TLMs) remains limited due to the prohibitive computational cost of kernel inference. We evaluate SIKA-GP on the CLINIC150~\citep{hasan2023bengali} intent classification benchmark, which contains 150 in-distribution (ID) intents and a designated out-of-distribution (OOD) class. Models are trained only on ID intents, while OOD detection is evaluated post hoc using predictive uncertainty.

\textbf{Setup}. We use \texttt{DistilBERT} as the feature encoder and apply GP layers as the final predictive layer, enabling structured uncertainty estimation on top of transformer representations. We report accuracy, ECE, and NLL in the ID validation set and evaluate AUROC and AUPRC using predictive entropy in the OOD test set. The detailed setting can be found in \cref{subsec:tlm supp}.

\textbf{Results}. \cref{tab:clinc150_ood,tab:clinc150} shows that DKL-SIKA consistently matches or improves predictive accuracy compared to SVDKL and DAK, while achieving lower NLL and ECE, indicating better calibrated predictions without sacrificing classification performance.
Under distribution shift, DKL-SIKA yields the lowest NLL and the highest AUROC, demonstrating more reliable uncertainty estimates for separating ID and OOD intents. 
All GP baselines use the same predictive-uncertainty scoring protocol; the advantage of SIKA-GP is that its cheaper stochastic forward pass makes it practical to average more posterior samples, which reduces the noise of entropy or mutual-information based OOD scores.
In terms of computational complexity, these improvements come with a markedly lower training cost, reducing the time per epoch by more than $2\times$ relative to SVDKL. Overall, the results indicate that SIKA-GP provides an efficient and scalable layer for uncertainty estimation when combined with TLMs, filling a gap that earlier GP approaches did not address.

\subsection{Ablations Study}

We perform two ablations to identify: 1) whether the Laplace kernel restriction causes a large predictive trade-off; 2) how sensitive SIKA-GP is to the dyadic level $L$ or equivalently the number of inducing points $M=2^L+1$.

\begin{table}[ht]
\caption{Results on CIFAR-100 with different choices of kernels. Best values are shown in bold.}
\label{tab:cifar100_kernel_compact}
\centering
\footnotesize
\setlength{\tabcolsep}{3.5pt}
\renewcommand{\arraystretch}{1.08}
\begin{tabularx}{\columnwidth}{@{}>{\raggedright\arraybackslash}Xccc@{}}
\toprule
Kernel & ACC $\uparrow$ & ECE / NLL $\downarrow$ & Train / Infer $\downarrow$ \\
\midrule
RBF (DAK)      & \textbf{77.43} & 5.36 / 0.98 & 37.38 / 9.78 \\
Laplace (DAK) & 76.13          & 4.18 / 0.99 & 35.62 / 9.66 \\
Laplace (SIKA)  & 76.61          & \textbf{3.45} / \textbf{0.94} 
                                 & \textbf{20.08} / \textbf{7.07} \\
\bottomrule
\end{tabularx}
\vskip -2mm
\end{table}

\paragraph{Choices of kernels.} The kernel ablation compares an RBF kernel, a Laplace kernel with dense inference, and the proposed SIKA on CIFAR-100. All three kernel choices are used as the last GP layer under the same DKL setting. This isolates whether the computationally favorable Laplace structure introduces a prohibitive predictive penalty in a deep-feature setting. As shown in \cref{tab:cifar100_kernel_compact}, the dense RBF achieves the highest accuracy, but it has higher training and inference cost. Replacing RBF by a dense Laplace kernel slightly reduces accuracy, while applying SIKA to the Laplace kernel recovers part of the accuracy, improves calibration, and substantially reduces training and inference time. The results support our intended claim: the Laplace assumption is a modeling trade-off, but in the deep kernel setting, the trade-off can be small relative to the computational gain.

\begin{table}[ht]
\caption{CIFAR-10 performance under different dyadic levels $L$ ($M=2^L+1$). Best values are shown in bold.}
\label{tab:metric_results}
\centering
\footnotesize
\setlength{\tabcolsep}{6pt}
\renewcommand{\arraystretch}{1.12}
\begin{tabular}{lccccc}
\toprule
$L$ & 1 & 2 & 4 & 7 & 10 \\
\midrule
$M=2^L+1$ & 3 & 5 & 17 & 129 & 1025 \\
Active bases & 3 & 4 & 6 & 9 & 12 \\
ACC   & 92.59 & 94.56 & 94.59 & 94.76 & \textbf{94.78} \\
ECE   & 5.79  & 4.17  & 4.78  & 4.10  & \textbf{4.06}  \\
NLL   & 0.420 & 0.279 & 0.391 & 0.281 & \textbf{0.270}  \\
\bottomrule
\end{tabular}
\vskip -2mm
\end{table}

\paragraph{Choices of the dyadic level.}
We next vary the dyadic level $L$ on CIFAR-10. Since $M=2^L+1$, this ablation covers settings from $3$ to $1025$ inducing points; however, each input activates only $L+2$ basis functions. The results indicate mild sensitivity to the number of inducing points once $L\geq2$: accuracy improves sharply from $L=1$ to $L=2$ and then remains stable, while ECE and NLL also stay competitive at larger levels. Importantly, increasing $M$ from $129$ to $1025$ only increases the number of active basis functions from $9$ to $12$, which explains why larger inducing grids can be used avoiding dense ${O}(M)$ inference.

\section{Conclusion}
We presented a GP with sparse inducing kernel approximations (SIKA-GP) that accelerates GP inference by exploiting the sparsely activated basis functions. SIKA-GP reduces the complexity of the inference to ${O}(\log M)$, where $M$ is the number of inducing points designed on a dyadic grid. Our approach maintains predictive accuracy, offers significant efficiency improvements, and seamlessly extends to deep kernel models due to its tensorized sparse indexing on equivalent BNN representations. Experimental results highlight SIKA-GP as a practical and scalable approach to bringing GPs into modern Bayesian deep learning.

\textbf{Limitations}: 1) The underlying structured sparsity of the basis functions partly arises from the restrictive Laplace kernel, although this limited expressiveness can be alleviated through deep neural models. 2) Despite its efficiency and stability, SIKA-GP trades some flexibility in learning arbitrary inducing locations for a structured fixed grid. 

\textbf{Future work} will explore adaptive or hybrid dyadic grids, extensions of the sparse execution pipeline to broader kernel classes, and integration with parameter-efficient fine-tuning methods such as LoRA for modern Bayesian large-scale models.
% can relax the fixed-grid design of SIKA-GP to better adapt to more complex settings while retaining its efficiency advantages, and incorporate SIKA-GP with parameter-efficient fine-tuning, such as LoRA, in modern Bayesian large-scale models.

% Acknowledgements should only appear in the accepted version.
\section*{Acknowledgements}

The work of W. Zhao and C. Tian was supported in part by the National Science Foundation via grants DMS-2312173 and ECCS-2433631. The authors gratefully acknowledge the computing infrastructure provided by the Department of Electrical \& Computer Engineering at Texas A\&M University, and the reviewers’ constructive feedback. 

% If a paper is accepted, the final camera-ready version can (and usually should)
% include acknowledgements.  Such acknowledgements should be placed at the end of
% the section, in an unnumbered section that does not count towards the paper
% page limit. Typically, this will include thanks to reviewers who gave useful
% comments, to colleagues who contributed to the ideas, and to funding agencies
% and corporate sponsors that provided financial support.

\section*{Impact Statement}
% This paper presents work whose goal is to advance the field of Gaussian Processes and Machine Learning. There are many potential societal consequences of our work, none of which we feel must be specifically highlighted here.

This work aims to make GP-based uncertainty estimation more computationally practical in modern deep learning pipelines. 
There are many potential societal consequences of our work, especially in advancing the field of Gaussian Processes and trustworthy AI/ML.

% Authors are \textbf{required} to include a statement of the potential broader
% impact of their work, including its ethical aspects and future societal
% consequences. This statement should be in an unnumbered section at the end of
% the paper (co-located with Acknowledgements -- the two may appear in either
% order, but both must be before References), and does not count toward the paper
% page limit. In many cases, where the ethical impacts and expected societal
% implications are those that are well established when advancing the field of
% Machine Learning, substantial discussion is not required, and a simple
% statement such as the following will suffice:

% ``This paper presents work whose goal is to advance the field of Machine
% Learning. There are many potential societal consequences of our work, none
% which we feel must be specifically highlighted here.''

% The above statement can be used verbatim in such cases, but we encourage
% authors to think about whether there is content which does warrant further
% discussion, as this statement will be apparent if the paper is later flagged
% for ethics review.

% In the unusual situation where you want a paper to appear in the
% references without citing it in the main text, use \nocite
% \nocite{langley00}

\bibliography{references}
\bibliographystyle{icml2026}

%%%%%%%%%%%%%%%%%%%%%%%%%%%%%%%%%%%%%%%%%%%%%%%%%%%%%%%%%%%%%%%%%%%%%%%%%%%%%%%
%%%%%%%%%%%%%%%%%%%%%%%%%%%%%%%%%%%%%%%%%%%%%%%%%%%%%%%%%%%%%%%%%%%%%%%%%%%%%%%
% APPENDIX
%%%%%%%%%%%%%%%%%%%%%%%%%%%%%%%%%%%%%%%%%%%%%%%%%%%%%%%%%%%%%%%%%%%%%%%%%%%%%%%
%%%%%%%%%%%%%%%%%%%%%%%%%%%%%%%%%%%%%%%%%%%%%%%%%%%%%%%%%%%%%%%%%%%%%%%%%%%%%%%
\newpage
\appendix
\onecolumn

\counterwithin{figure}{section}
\counterwithin{table}{section}
\counterwithin{theorem}{section}

\section{Proof of Theorems}
\subsection{Proof of \cref{Th:sparse}}
Our proof needs the notion of the reproducing kernel Hilbert space (RKHS) generated by $K$, denoted as $\mathcal{N}_K[0,1]$. Denote the inner product and the norm of this space by $\langle \cdot,\cdot\rangle_\mathcal{N}$ and $\|\cdot\|_\mathcal{N}$, respectively. Recall that $\mathcal{N}_K[0,1]$ is defined as the completion of the function space
\begin{align}
    \left\{\sum_{j=1}^N\beta_j K(\cdot,x_j):N\in\mathbb{N},\beta_j\in\mathbb{R},x_j\in[0,1]\right\}
\end{align}
under the inner product
\begin{align}
    & \left\langle\sum_{j=1}^N\beta_j K(\cdot,x_j),\sum_{k=1}^{N'}\beta'_k K(\cdot,x'_k)\right\rangle_{\mathcal{N}}=\sum_{j=1}^N\sum_{k=1}^{N'}\beta_j\beta'_k K(x_j,x'_k).
\end{align}
A remarkable property of the reproducing kernel Hilbert space is the reproducing property, namely,
\begin{align}
    f(x)=\langle f,K(\cdot,x)\rangle_{\mathcal{N}}, 
\end{align}
for all $f\in\mathcal{N}_K[0,1]$ and $x\in[0,1]$. 

\textbf{Theorem 1}(Restated). Given the Laplace kernel $K(x,y)=\exp(-\theta|x-y|)$ and inducing points $\Uv:=\{0\cdot 2^{-L},1\cdot2^{-L},\ldots,2^{L}\cdot2^{-L}\}$, there is a sparsely activated BNN in terms of $f(x)=K(x,\Uv) [K(\Uv, \Uv)]^{-1} f(\Uv):=\phi(x)\wv$ with a set of basis functions $\phi(x)=\{\psi_{lm}:[0,1]\rightarrow \Rb\}$ and $\wv\sim \mathcal{N}(0,\Iv)$, given by 
\small{\begin{align*}    
\psi_{01}(x):=\frac{\exp\{-\theta x\}+\exp\{-\theta(1-x)\}}{\sqrt{2(1+\exp\{-\theta\})}},\\
\psi_{02}(x):=\frac{\exp\{-\theta x\}-\exp\{-\theta(1-x)\}}{\sqrt{2(1-\exp\{-\theta\})}},
\end{align*}
}
and
\small{\begin{eqnarray*}
\psi_{lm}(x):=
    \begin{cases}
\sqrt{\dfrac{2}{\sinh\!\big(2^{1-l}\theta\big)}}\;\sinh\!\big(\theta(x-(m-1)2^{-l})\big), \\ \qquad\qquad\text{if } (m-1)2^{-l}\le x\le m2^{-l},\\[10pt]
\sqrt{\dfrac{2}{\sinh\!\big(2^{1-l}\theta\big)}}\;\sinh\!\big(\theta((m+1)2^{-l}-x)\big), \\ \qquad\qquad\text{if } m2^{-l}\le x\le (m+1)2^{-l},\\[10pt]
0, ~\text{if }x\notin[(m-1)2^{-l},(m+1)2^{-l}],
\end{cases}
\end{eqnarray*}
}
for $l=1,2,\ldots,L$, and $m=1,3,\ldots,2^{l}-1$.

\begin{proof}[Proof of Theorem 1]  
It suffices to prove that there exists a matrix $\Pv_\Uv$ satisfying $K(\Uv,\Uv)=\Pv_\Uv\Pv_\Uv^{\top}$, such that $\phi(x):=K(x,\Uv)\Pv_\Uv^{-\top}$ consists of all $\psi_{lm}$'s.

To show this, consider $\phi_\Pv(\cdot):=K(\cdot,\Uv)\Pv^{-\top}$ with an invertible $\Pv$. The Gram matrix of $\phi_\Pv(\cdot)$ under $\langle \cdot,\cdot\rangle_{\mathcal{N}}$ is
\[\langle \phi_\Pv^\top,\phi_\Pv\rangle_{\mathcal{N}}=\Pv^{-1}\langle K(\cdot,\Uv)^\top,K(\cdot,\Uv)\rangle_{\mathcal{N}}\Pv^{-\top}=\Pv^{-1}K(\Uv,\Uv)\Pv^{-\top}.\]
This implies that $\langle \phi_\Pv^\top,\phi_\Pv\rangle_{\mathcal{N}}=\Iv$ if and only if $K(\Uv,\Uv)=\Pv\Pv^{\top}$. Consequently, $K(\Uv,\Uv)=\Pv\Pv^{\top}$ if and only if $\phi_\Pv$ is an orthonormal basis of $\Vv:=\operatorname{span}\{K(\cdot,0\cdot 2^{-L}),K(\cdot,1\cdot 2^{-L}),\ldots,K(\cdot,2^L\cdot 2^{-L})\}$ under $\langle \cdot,\cdot\rangle_{\mathcal{N}}$.

%\textcolor{red}{Chao-comment: The notation about the gram matrix is somewhat confusing. Will it be better to write $\langle K(\Uv,\cdot),K(\cdot,\Uv)\rangle_{\mathcal{N}}$ in the second step as $\langle K(\cdot,\Uv)^\top,K(\cdot,\Uv)\rangle_{\mathcal{N}}$? }

Therefore, it suffices to prove that $\psi_{lm}$ forms an orthonormal basis of $\Vv$ under $\langle \cdot,\cdot\rangle_{\mathcal{N}}$. Such a result can be verified via direct calculations or a more principled method of construction introduced in Section \ref{sec:basis} that works for general Gauss-Markov covariances.
\end{proof}

\subsection{Proof of \cref{cor:dyadic}}

\textbf{Corollary 1}(Restated). The \textit{only} activated function $\psi_{lm}$ in $\psi_l:=\{ \psi_{lm}: m\in\{ 1,3,\ldots,2^l-1 \} \}$ is associated with the closest point $u_{lm}:=m\cdot2^{-l}\in \Uv_l$ to the input $x$ for all $l=1,\ldots,L$.

\begin{proof}[Proof of Corollary 1]
Let the inducing points be $\Uv:=\{0\cdot 2^{-L},1\cdot2^{-L},\ldots,2^{L}\cdot2^{-L}\}$. Define the dyadic point sets $\Uv_l$ with order $l=1,\ldots,L$ as 
\begin{align}
    \Uv_l:=\left[ m\cdot 2^{-l}:m\in \{ 1,3,5,\ldots ,2^{l}-1\} \right].
\end{align}
It is obvious that each $\Uv_l$ contains $2^{l-1}$ inducing points and we have
\begin{align}
    \Uv_{i}\bigcap \Uv_{j}=\emptyset, ~\text{if } i\neq j.
\end{align}

Each $\psi_{lm}(x)$, $m=1, 3,\ldots,2^l-1$, is supported in a local region $[(m-1)2^{-l}, (m+1)2^{-l}]$. Therefore, given a specific order $l\in\{1,2,\ldots,L\}$, the activated $\psi_{lm^*}(x)$ implies that
\begin{align}
    m^* = \argmin_{m\in\{1,3,\ldots,2^l-1 \}} |x-m\cdot2^{-l} |.
\end{align}

If there exists another $(m+2)2^{-l}$ or $(m-2)2^{-l}$ that is closer to $x$, then $x$ will be located in $(-\infty,(m-1)2^{-l}]\cup[(m+1)2^{-l},+\infty)$ which implies that $\psi_{lm}(x)=0$. Hence, if $\psi_{lm}(x)>0$ for specific $l\in\{1,\ldots,L\}$ and $m\in \{1,3,\ldots,2^l-1 \}$, the closest inducing point in $\Uv_l$ is given by $u_{lm}=m\cdot2^{-l}$.

Therefore, the \textit{only} activated function $\psi_{lm}$ in $\psi_l$ is associated with the closest point $u_{lm}:=m\cdot2^{-l}\in \Uv_l$ to the input $x$ for $l=1,\ldots,L$.
\end{proof}

\section{Compact Supported Orthonormal Basis for Gauss-Markov Covariances}\label{sec:basis}
Let $K(x,y)$ be the covariance function of a continuous Gauss-Markov process on $[0,1]$. Use the notation $a\wedge b:=\min\{a,b\}$ and $a\vee b:=\max\{a,b\}$. It is known that \citep{marcus2006markov} $K(x,y)$ must admit the representation $K(x,y)=p(x\wedge y)q(x\vee y)$, for some functions $p(t)>0$  and $q(t)> 0$ for all $t\in(0,1)$. For instance, the Laplace correlation can be written as
\begin{align*}
    K(x,y)  =\exp(-\theta|x-y|)  =\exp(\theta (x\wedge y))\exp(-\theta (x\vee y)).
\end{align*}

\subsection{Template basis function}

Our basis functions will be constructed using a ``template basis function'', denoted by $\phi_{a,b,c}(\cdot)$, for $0\leq a<b<c\leq 1$. The function $\phi_{a,b,c}(\cdot)$ is defined by the following Theorem \ref{Th1}.

\begin{theorem}\label{Th1}
	Given any $0\leq a<b<c\leq 1$, there exists a unique function $\phi_{a,b,c}(\cdot)$ so that the following are true:
	\begin{enumerate}
		\item $\phi_{a,b,c}(x)=A K(x,a)+B K(x, b)+C K(x, c)$ for some $A,C\in\mathbb{R}$ and $B>0$.
		\item $\phi_{a,b,c}(x)=0$ whenever $x\in [0,a]\cup[c,1]$.\label{S2}
		\item $\|\phi_{a,b,c}\|_\mathcal{N}=1$.
	\end{enumerate}
\end{theorem}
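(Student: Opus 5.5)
The plan is to use the factorization $K(x,y)=p(x\wedge y)q(x\vee y)$ and reduce condition 2 to two linear equations in $(A,B,C)$. For $x\in[0,a]$ we have $x\le a<b<c$, so
\[
K(x,a)=p(x)q(a),\qquad K(x,b)=p(x)q(b),\qquad K(x,c)=p(x)q(c).
\]
Hence on $[0,a]$,
\[
\phi(x)=p(x)\bigl(Aq(a)+Bq(b)+Cq(c)\bigr),
\]
and since $p>0$ on $(0,1)$, vanishing there is equivalent to $Aq(a)+Bq(b)+Cq(c)=0$. When $a=0$, the interval $[0,a]$ is the single point $0$; we impose the same linear equation, and it also covers this point. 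Symmetrically, for $x\in[c,1]$ we get
\[
\phi(x)=q(x)\bigl(Ap(a)+Bp(b)+Cp(c)\bigr),
\]
so the second condition is $Ap(a)+Bp(b)+Cp(c)=0$.

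These are two homogeneous linear equations in three unknowns, so the solution space has dimension at least one. I will show it is exactly one-dimensional and that $B\neq0$ on it. Setting $B=0$ leaves the system $Aq(a)+Cq(c)=0$, $Ap(a)+Cp(c)=0$, which has only the trivial solution iff
\[
p(a)q(c)-p(c)q(a)\neq 0,\quad\text{i.e.}\quad \frac{p(a)}{q(a)}\neq\frac{p(c)}{q(c)}.
\]
The key fact to establish is that $r(t):=p(t)/q(t)$ is strictly increasing. This follows from positive definiteness. For $s<t$ the $2\times2$ covariance matrix of $\{s,t\}$ has determinant
\[
p(s)q(s)\,p(t)q(t)-p(s)^2q(t)^2=p(s)q(t)\bigl(p(t)q(s)-p(s)q(t)\bigr),
\]
and this is positive for a nondegenerate process (for Laplace it is explicit: $r(t)=e^{2\theta t}$). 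With $r$ strictly increasing, the $2\times 3$ system has rank 2, and its null space is spanned by the cross product of the two coefficient rows. The $B$-component of that vector is, up to sign, $p(a)q(c)-p(c)q(a)\neq0$, so we may normalize $B>0$.

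Next comes normalization. Since $K(\cdot,a),K(\cdot,b),K(\cdot,c)$ are linearly independent (their Gram matrix is positive definite), any nonzero $(A,B,C)$ gives a nonzero function, and its RKHS norm equals $\sqrt{v^\top K(\{a,b,c\})v}>0$. Scaling by a positive constant gives norm one and keeps $B>0$.

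Uniqueness follows the same way: any $\phi$ satisfying conditions 1 and 2 has a coefficient vector in the one-dimensional null space. Condition 3 fixes the scale up to sign, and $B>0$ fixes the sign.

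The step I expect to be the main obstacle is justifying strict monotonicity of $p/q$, i.e. strict positive definiteness on distinct points, together with the boundary cases $a=0$ or $c=1$, where $p$ or $q$ might vanish. I will handle this by assuming the kernel is strictly positive definite (true for Laplace) and checking the endpoint equations directly rather than dividing by $p(0)$ or $q(1)$.
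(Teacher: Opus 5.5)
Your argument is correct, assuming (as the paper also does) that $K$ is strictly positive definite at distinct points. Its algebraic core, however, differs from the paper's.

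The paper imposes only the two point conditions $\phi(a)=\phi(c)=0$, i.e.\ that the first and third entries of $\mathbf{K}(A,B,C)^\top$ vanish. Your two equations are exactly these, divided by $p(a)$ and $q(c)$. The paper uses the factorization $K(x,y)=p(x\wedge y)q(x\vee y)$ only once, to show that these point conditions already force $\phi\equiv0$ on $[0,a]\cup[c,1]$. The same two conditions reduce the quadratic norm constraint to the linear equation $B\bigl(AK(a,b)+BK(b,b)+CK(b,c)\bigr)=1$. The whole problem therefore becomes the single system $\mathbf{K}(AB,B^2,BC)^\top=e_2$, solved by $B=\sqrt{e_2^\top\mathbf{K}^{-1}e_2}$, $A=e_1^\top\mathbf{K}^{-1}e_2/B$, $C=e_3^\top\mathbf{K}^{-1}e_2/B$.

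In that formulation the admissible coefficient vectors form the line spanned by $\mathbf{K}^{-1}e_2$, and $B\neq0$ on it simply because $e_2^\top\mathbf{K}^{-1}e_2>0$. So the step you flagged as the main obstacle, $p(a)q(c)\neq p(c)q(a)$ (monotonicity of $p/q$), is never needed; only positive definiteness of the $3\times3$ Gram matrix is used. The endpoint cases $a=0$ and $c=1$ also need no special treatment there, since condition 2 trivially gives $\phi(a)=\phi(c)=0$. Your version needs $p(0)\neq0$ and $q(1)\neq0$ for the ``only if'' direction; this does follow from $K(0,0),K(1,1)>0$ under your strict positive definiteness assumption.

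What your route buys is an explicit coefficient vector in terms of $p$ and $q$: the cross product of $(q(a),q(b),q(c))$ and $(p(a),p(b),p(c))$. That makes closed forms such as the Laplace one a short computation. The paper's formula instead exhibits $\phi_{a,b,c}$ as the normalized kernel interpolant of the values $(0,1,0)$ at $(a,b,c)$, with no reference to $p,q$ beyond the support argument.
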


\begin{proof}
	We will prove this theorem by constructing $A,B$ and $C$ so that the statements in the theorem hold, and show that such a construction is unique.
	
	We note that, to ensure $\phi_{a,b,c}(a)=\phi_{a,b,c}(c)=0$, we need
	\begin{eqnarray}
	A K(a,a)+BK(a,b)+CK(a,c)&=&0,\label{C1}\\
	A K(a,c)+BK(b,c)+CK(c,c)&=&0.\label{C2}
	\end{eqnarray}
	We will show that, once (\ref{C1}) and (\ref{C2}) are true, Statement \ref{S2} is fulfilled. First, we consider $x\in [0, a]$. We can assume that $0<a$, because otherwise there is nothing to prove. Then, we know that $p(a)>0$, and
	\begin{eqnarray*}
		&&A K(x,a)+BK(x,b)+CK(x,c)\\
		&=&A p(x\wedge a)q(x\vee a)+Bp(x\wedge b)q(x\vee b)+Cp(x\wedge c)q(x\vee c)\\
		&=&Ap(x)q(a)+Bp(x)q(b)+Cp(x)q(c)\\
		&=&\frac{p(x)}{p(a)}\left\{Ap(a)q(a)+Bp(a)q(b)+Cp(a)q(c)\right\}\\
		&=&\frac{p(x)}{p(a)}\left\{	A K(a,a)+BK(a,b)+CK(a,c)\right\}=0.
	\end{eqnarray*}
	The analogous result for $x\in[c,1]$ can be derived in a similar manner.
	
	Now we incorporate the norm condition $\|\phi_{a,b,c}\|_{\mathcal{N}}=1$. Let 
	\begin{eqnarray*}
		\mathbf{K}=\begin{pmatrix}
			K(a,a) & K(a,b) & K(a,c)\\
			K(a,b) & K(b,b) & K(b,c)\\
			K(a,c) & K(b,c) & K(c,c)
		\end{pmatrix}.
	\end{eqnarray*}
	Then the norm condition can be represented by
	\begin{eqnarray*}
		(A,B,C)
		\mathbf{K}
		(A,B,C)^T=1.
	\end{eqnarray*}
	In view of (\ref{C1}) and (\ref{C2}), the above equation is reduced to
	\begin{eqnarray}
	A B K(a,b) +B^2 K(b,b) +B C K(b,c)=1.\label{C3}
	\end{eqnarray}
	We denote $(e_1,e_2,e_3):=I_3$, the $3\times 3$ identity matrix.
	Then (\ref{C1})-(\ref{C3}) is equivalent to
	$$\mathbf{K}(AB,B^2,BC)^T=e_2. $$
	Given the condition $B>0$, the above equation has a unique solution
	$$B=\sqrt{e_2^T\mathbf{K}^{-1}e_2},A=e_1^T\mathbf{K}^{-1}e_2/B,C=e_3^T\mathbf{K}^{-1} e_2/B, $$
	in which $B$ is indeed positive because $e_2^T\mathbf{K}^{-1}e_2>0$ as $K$ is positive definite. Then we complete the proof.
\end{proof}

%The template basis functions have the following important property. Denote the support of a function $f$ by $supp(f)$.
The next result is simple yet useful.

\begin{theorem}\label{T2}
	Suppose $0\leq a<b<c\leq 1$. Then for any $f\in\mathcal{N}_K[0,1]$ satisfying $f(a)=f(b)=f(c)=0$, we have
	$$\langle \phi_{a,b,c},f\rangle_\mathcal{N}=0. $$
\end{theorem}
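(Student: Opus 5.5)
The proof is a direct application of the reproducing property combined with the explicit form of $\phi_{a,b,c}$ from Theorem \ref{Th1}. By Statement 1 of Theorem \ref{Th1}, we can write
\[
\phi_{a,b,c}=A K(\cdot,a)+B K(\cdot,b)+C K(\cdot,c)
\]
for some real coefficients $A,B,C$. This is a finite linear combination of kernel sections, so $\phi_{a,b,c}$ belongs to $\mathcal{N}_K[0,1]$, and its inner product with any $f\in\mathcal{N}_K[0,1]$ is well defined.

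Next I use bilinearity of $\langle\cdot,\cdot\rangle_{\mathcal{N}}$ to expand the inner product termwise:
\[
\langle \phi_{a,b,c},f\rangle_{\mathcal{N}}=A\langle K(\cdot,a),f\rangle_{\mathcal{N}}+B\langle K(\cdot,b),f\rangle_{\mathcal{N}}+C\langle K(\cdot,c),f\rangle_{\mathcal{N}}.
\]
The reproducing property $f(x)=\langle f,K(\cdot,x)\rangle_{\mathcal{N}}$, together with symmetry of the real inner product, turns each term into a point evaluation. This gives $Af(a)+Bf(b)+Cf(c)$, which vanishes under the hypothesis $f(a)=f(b)=f(c)=0$.

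There is no real obstacle. The only point needing care is that the reproducing property is applied to a general element $f$ of the completed space, not just to finite kernel combinations; this is exactly how the property was stated earlier in the appendix. None of the specific values of $A,B,C$, the support property, or the normalization from Theorem \ref{Th1} is needed. Only membership of $\phi_{a,b,c}$ in $\operatorname{span}\{K(\cdot,a),K(\cdot,b),K(\cdot,c)\}$ matters.
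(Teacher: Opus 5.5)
Your proof is correct and is essentially the paper's own argument: write $\phi_{a,b,c}=AK(\cdot,a)+BK(\cdot,b)+CK(\cdot,c)$, expand by bilinearity, and use the reproducing property to get $Af(a)+Bf(b)+Cf(c)=0$. Your remark is also right: only membership of $\phi_{a,b,c}$ in $\operatorname{span}\{K(\cdot,a),K(\cdot,b),K(\cdot,c)\}$ is used, not the support or normalization properties.
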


\begin{proof}
	This result is a direct consequence of the reproducing property, as
	\begin{eqnarray*}
		\langle \phi_{a,b,c},f\rangle_\mathcal{N}&=&\langle A K(\cdot,a)+B K(\cdot, b)+C K(\cdot, c),f\rangle_\mathcal{N}\\
		&=&Af(a)+Bf(b)+Cf(c)=0.
	\end{eqnarray*}
\end{proof}

\subsection{Orthonormal basis}
%\textcolor{red}{Chao-comment: Can we give some intuitive explanation of the set B? Is it exactly the dyadic set? }

%\textcolor{blue}{Rui: In the original version, I considered the infinite sum. Since our paper focuses only on finite-dimensional induced approximation, I replaced $\mathcal{B}$ by $\Uv$.}

Note that each element of $x\in \Uv\setminus \{0,1\}$ can be uniquely represented as $m 2^{-l}$ for some odd number $m$ and $l\in\mathbb{N}_+$.
We denote $T(x)=(l,m)$, and $L(x)=l$. Then for each $x\in\Uv$ and $T(x)=(l,m)$, define
$$\psi_{T(x)}=\psi_{l,m}:=\phi_{x-2^{-l},x,x+2^{-l}}.$$
%First, we introduce some notation. Let $\mathcal{B}$ denote the numbers in $(0,1)$ with finite binary representation, i.e.,
%$$\mathcal{B}:=\{x\in(0,1):x=m 2^{-n},m,n\in\mathbb{N}_+\}. $$
%Denote $\mathcal{B}_*=\mathcal{B}\cap\{0,1\}$.
%Let $(0,1)_B$ be the numbers in $(0,1)$ with finite binary representation, i.e.,
%$$(0,1)_B:=\{x\in(0,1):x=m 2^{-n},m,n\in\mathbb{N}_+\}. $$
%Clearly, each $x\in\mathcal{B}$ possesses a unique representation $x=m 2^{-n}$ so that $m$ is an odd number. 

%\textcolor{red}{Chao-comment: Do we want to mention that $\psi_{x}$ is exactly $\psi_{T(x)}$ that we need? Actually, should we reverse the order of $(m,n)$ to $T(x)=(n,m)$ or $T(x)=(l,m)$ for this to hold.}

%\textcolor{blue}{Rui: Changes made.}

%\textcolor{blue}{Rui: There is a one-to-one mapping between $\psi_{l,m}$ and $\psi_{T(x)}$, but the numbers are not the same. For $x=m2^{-n}$, $m$ can only be an odd number.}

We have the following theorem, which shows that $\{\psi_{T(x)}:x\in\Uv\setminus\{0,1\}\}$ forms an orthonormal basis of a $2^{L}-1$-dimensional subspace of $\Vv :=\operatorname{span}\{K(\cdot,0\cdot 2^{-L}),K(\cdot,1\cdot 2^{-L}),\ldots,K(\cdot,2^L\cdot 2^{-L})\}$.

\begin{theorem}
The following statements are true:
\begin{enumerate}
    \item $\langle \psi_{T(x)},\psi_{T(x')}\rangle_\mathcal{N}={1}_{\{x=x'\}}$, for $x,x'\in\Uv\setminus\{0,1\}$.\label{S31}
    \item $\langle \psi_{T(x)},K(\cdot,0)\rangle_\mathcal{N}=\langle \psi_{T(x)},K(\cdot,1)\rangle_\mathcal{N}=0$.\label{S32}
    \item Let $V_0:=span\{K(\cdot,0),K(\cdot,1)\}$. Suppose $\{\psi_{0,0},\psi_{0,1}\}$ is an orthonormal basis of $V_0$. Then $\{\psi_{00},\psi_{01},\psi_{T(x)}:x\in\Uv\setminus\{0,1\}\}$ forms an orthonormal basis of $\Vv:=\operatorname{span}\{K(\cdot,0\cdot 2^{-L}),K(\cdot,1\cdot 2^{-L}),\ldots,K(\cdot,2^L\cdot 2^{-L})\}$.\label{S33}
    %The set
    %$$\left\{\alpha K(\cdot,0)+\beta K(\cdot,1)+\sum_{n\leq N}\sum_{\substack{m \text{ is odd}\\m2^{-n}\in (0,1)_B}} \gamma_{n,m}\psi_{m,n}:\alpha,\beta,\gamma_{n,m}\in\mathbb{R},N\in\mathbb{N}_+ \right\} $$
    %is dense in $\mathcal{N}_K[0,1].$ \label{S32}
    %Let $V_0=\{K(\cdot,0),K(\cdot,1)\}\cup \{\psi_{m,n}:m 2^{-n}\in (0,1)_B,m \text{ is odd}\}$. Then the closure of $span(V_0)$ under $\|\cdot\|_\mathcal{N}$ is
\end{enumerate}
\end{theorem}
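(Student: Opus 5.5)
The plan is to derive all three statements from Theorem \ref{Th1} (support and norm of the template functions) and Theorem \ref{T2} (orthogonality to functions vanishing at the three nodes), together with the reproducing property.

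\textbf{Statement \ref{S32}.} Write $\psi_{T(x)}=\phi_{a,b,c}$ with $a=x-2^{-l}$, $b=x$, $c=x+2^{-l}$, where $(l,m)=T(x)$. Then
\[
\langle \psi_{T(x)},K(\cdot,0)\rangle_\mathcal{N}=\psi_{T(x)}(0)
\quad\text{and}\quad
\langle \psi_{T(x)},K(\cdot,1)\rangle_\mathcal{N}=\psi_{T(x)}(1).
\]
Since $0\le a$ and $c\le 1$, Statement \ref{S2} of Theorem \ref{Th1} makes both values zero.

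\textbf{Statement \ref{S31}.} The case $x=x'$ is just the normalization $\|\phi_{a,b,c}\|_\mathcal{N}=1$. For $x\neq x'$, set $l=L(x)$ and $l'=L(x')$, and assume without loss of generality $l\le l'$. I will show that $\psi_{T(x')}$ vanishes at the three nodes $x-2^{-l},x,x+2^{-l}$ of $\psi_{T(x)}$; Theorem \ref{T2} then gives orthogonality. By Theorem \ref{Th1}, $\psi_{T(x')}$ vanishes outside the open interval $(x'-2^{-l'},x'+2^{-l'})$. The nodes of $\psi_{T(x)}$ are multiples of $2^{-l}$, and hence multiples of $2^{-l'}$. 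The only multiple of $2^{-l'}$ inside that open interval is $x'$ itself. So it remains to check that $x'$ is not one of the three nodes.

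\begin{itemize}
\item If $l=l'$: $x'\neq x$, and $x\pm 2^{-l}$ are even multiples of $2^{-l}$ while $x'$ is an odd multiple, so $x'$ is not a node.
\item If $l<l'$: $x'$ is an odd multiple of $2^{-l'}$, while every node is a multiple of $2^{-l}$, hence an even multiple of $2^{-l'}$. So again $x'$ is not a node.
\end{itemize}

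In either case $\psi_{T(x')}$ vanishes at all three nodes, and orthogonality follows. This handles both orderings, since the inner product is symmetric.

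\textbf{Statement \ref{S33}.} Each $\psi_{T(x)}$ is a combination of $K(\cdot,x\pm 2^{-l})$ and $K(\cdot,x)$, with nodes lying in $\Uv$, so $\psi_{T(x)}\in\Vv$. Combining Statements \ref{S31} and \ref{S32} with the orthonormality of $\{\psi_{00},\psi_{01}\}$ in $V_0$ (which is orthogonal to every $\psi_{T(x)}$ by Statement \ref{S32}), the family
\[
\{\psi_{00},\psi_{01}\}\cup\{\psi_{T(x)}:x\in\Uv\setminus\{0,1\}\}
\]
is orthonormal. Orthonormal vectors are linearly independent, and there are $2+(2^L-1)=2^L+1$ of them. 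Also $\dim\Vv=2^L+1$, because $K(\Uv,\Uv)$ is positive definite. The family is therefore an orthonormal basis of $\Vv$.

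The main obstacle is the combinatorial check in Statement \ref{S31}: confirming that the support of $\psi_{T(x')}$ contains no node of $\psi_{T(x)}$ other than possibly $x'$, and that $x'$ is never such a node. The parity argument above is what settles it.
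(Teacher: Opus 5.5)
Your proof is correct and follows the paper's structure: Statement 2 comes from the reproducing property and the boundary zeros, Statement 1 from showing that the finer basis function vanishes at the three nodes of the coarser one and invoking Theorem~\ref{T2}, and Statement 3 from orthonormality plus a dimension count. The only difference is how you verify the node-vanishing: the paper splits into disjoint supports versus a support nested inside one half-interval, whereas your parity argument on the dyadic grids handles both cases at once and makes explicit the step the paper calls ``not hard to see'' (as well as $\psi_{T(x)}\in\Vv$ and $\dim\Vv=2^L+1$, which the paper leaves implicit).
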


\begin{proof}
	First, we prove Statement \ref{S31}. Let $(l,m)=T(x)$ and $(l',m')=T(x')$. When $m=m'$ and $l=l'$, this statement is ensured by the definition of the template basis functions. If $m\neq m'$ or $l\neq l'$, then either the supports of $\psi_{l,m}$ and $\psi_{l',m'}$ are non-overlapping, or the support of one function is contained within that of the other function. If the supports are non-overlapping, we have 
	\begin{eqnarray}\label{zeros}
	\psi_{l',m'}((m-1)2^{-l})=\psi_{l',m'}(m2^{-l})=\psi_{l',m'}((m+1)2^{-l})=0.
	\end{eqnarray}
	Then the orthogonality is implied by Theorem \ref{T2}. In the second case, we assume that the support of $\psi_{l',m'}$ is contained within that of $\psi_{l,m}$ without loss of generality. In this case, it is not hard to see that we have either $$[(m'-1)2^{-l'},(m'+1)2^{-l'}]\subset [(m-1)2^{-l},m2^{-l}]$$ or $$[(m'-1)2^{-l'},(m'+1)2^{-l'}]\subset [m2^{-l},(m+1)2^{-l}].$$
	In both situations, (\ref{zeros}) is also true, which proves Statement \ref{S31}.
	
	Statement \ref{S32} is easy to prove by noting the boundary conditions $\psi_x(0)=\psi_x(1)=0$ for all $x\in\Uv\setminus\{0,1\}$ and the reproducing property. Statement \ref{S33} is a direct consequence of Statements \ref{S31} and \ref{S32}, together with the matching dimensionality of the two linear spaces.
\end{proof}

\subsection{Application to Gaussian processes with a Laplace correlation}
Now consider a zero-mean Gaussian process with a correlation function
\[\operatorname{Cov}(Z(x),Z(y))=\exp\{-\theta|x-y|\}=\exp\{\theta (x\wedge y)\}\exp\{-\theta (x\vee y)\}.\]
Then the corresponding (\ref{C1})-(\ref{C2}) are
\begin{eqnarray*}
    A+Be^{-\theta(b-a)}+Ce^{-\theta(c-a)}=0,\\
Ae^{-\theta(c-a)}+Be^{-\theta(c-b)}+C=0,
\end{eqnarray*}
which, together with the unit RKHS norm condition, determines a unique solution.
Direct calculations show the solution
\begin{eqnarray*}
    \phi_{a,b,c}=
    \begin{cases}
0, & x\le a,\\[6pt]
\dfrac{2B\,\sinh\!\big(\theta(c-b)\big)}{\sinh\!\big(\theta(c-a)\big)}\,
\sinh\!\big(\theta(x-a)\big), & a\le x\le b,\\[10pt]
\dfrac{2B\,\sinh\!\big(\theta(b-a)\big)}{\sinh\!\big(\theta(c-a)\big)}\,
\sinh\!\big(\theta(c-x)\big), & b\le x\le c,\\[10pt]
0, & x\ge c,
    \end{cases}
\end{eqnarray*}
where
\[B=\left(\tfrac12\Big[\coth\!\big(\theta(b-a)\big)+\coth\!\big(\theta(c-b)\big)\Big]\right)^{\!1/2}.\]
Since we are only interested in the symmetric case with $c-b=b-a=:\Delta$, the above formula can be simplified as
\[
\phi_{b-\Delta,b,b+\Delta}=
\begin{cases}
\sqrt{\dfrac{2}{\sinh\!\big(2\theta\Delta\big)}}\;\sinh\!\big(\theta(x-a)\big), & b-\Delta\le x\le b,\\[10pt]
\sqrt{\dfrac{2}{\sinh\!\big(2\theta\Delta\big)}}\;\sinh\!\big(\theta(c-x)\big), & b\le x\le b+\Delta,\\[10pt]
0, & x\notin[b-\Delta,b+\Delta].
\end{cases}
\]
This can be used to construct all the ``wavelet-like'' basis $\psi_x$. Specifically, they are
\begin{eqnarray*}
\psi_{lm}=
    \begin{cases}
\sqrt{\dfrac{2}{\sinh\!\big(2^{1-l}\theta\big)}}\;\sinh\!\big(\theta(x-(m-1)2^{-l})\big), & (m-1)2^{-l}\le x\le m2^{-l},\\[10pt]
\sqrt{\dfrac{2}{\sinh\!\big(2^{1-l}\theta\big)}}\;\sinh\!\big(\theta((m+1)2^{-l}-x)\big), & m2^{-l}\le x\le (m+1)2^{-l},\\[10pt]
0, & x\notin[(m-1)2^{-l},(m+1)2^{-l}],
\end{cases}
\end{eqnarray*}
for $l=1,2,\ldots$, and $m=1,3,\ldots,2^{l}-1$.

In addition to $\psi_{lm}$, we need two basis functions to generate the ``boundary space'' $\operatorname{span}\{K(\cdot,0),K(\cdot,1)\}$. We use a direct orthogonalization approach. We can take the symmetric choice, for $0\leq x\leq 1$,
\begin{eqnarray*}    
\psi_{01}(x)=\frac{\exp\{-\theta x\}+\exp\{-\theta(1-x)\}}{\sqrt{2(1+\exp\{-\theta\})}},\\
\psi_{02}(x)=\frac{\exp\{-\theta x\}-\exp\{-\theta(1-x)\}}{\sqrt{2(1-\exp\{-\theta\})}}.
\end{eqnarray*}
Now all $\psi_{lm}$'s form an orthonormal basis of $\Vv$, and this basis is used in Theorem 1.

%I will provide a separate proof of the above convergence result as well as the rate of convergence later.

% In practice, we can choose $2^L+1$ inducing points at $i2^{L}$ with a fixed $L$ and $i=0,1,\ldots,2^L$. Then the induced approximation of $Z(t)$ becomes
% \begin{eqnarray}\label{GPApproximation}
%     Z_L(t)=X_{01} \psi_{01}(t)+X_{02}\psi_{02}(t)+\sum_{l=1}^L\sum_{m=\{1,3,\ldots,2^l-1\}}X_{lm} \psi_{lm}(t),
% \end{eqnarray}

\section{Details of Implementation}

In this section, we provide implementation details of SIKA-GP and the experimental setting. 

\textbf{Bayesian fully-connected layer}: SIKA-GP consists of a Bayesian fully-connected (FC) layer, $y=\phi(x)\wv$, and the sparsely activated basis $\phi(x)$. Following \cite{blundell2015weight}, the random weight $\wv$ is parameterized by its Gaussian mean and variance. To ensure a positive standard deviation, $\wv$ is parameterized by $\{\mv, \bm\rho \}$ as
\begin{align}
    \wv = \mv + \bm\sigma \odot \bm\epsilon = \mv + \log\left( 1+ \exp(\bm \rho)\right) \odot \bm\epsilon, ~\bm\epsilon \sim \Nc(\bm 0, \mathbf{I}),
\end{align}
where $\bm\sigma:=\log\left( 1+ \exp(\bm \rho)\right)$ is the deviation and $\bm\epsilon$ is sampled independently from a standard normal distribution. To improve the sampling efficiency and reduce the variance of stochastic gradient descent, we incorporate Bayesian linear flipout \cite{wen2018flipout} to sample $\bm \epsilon$ with the mini-batch data:
\begin{align}
    \yv & = \phi(\Xv)[\mv + \bm\sigma \odot\bm\epsilon] \\
        & = \phi(\Xv) \mv + (\phi(\Xv) \odot \Rv) (\bm\sigma \epsilon),
\end{align}
where $\Rv$ is the Rademacher random sign matrix ($\pm1$ masks) multiplied by the mini-batch $\phi(\Xv)$.

\textbf{Computing Infrastructure}: The software is built on \texttt{PyTorch}\footnote{The code of SIKA-GP is available as an anonymous zip file for review.}. The CPU benchmark experiments were run on M4 Pro with 48GB RAM, and the CUDA benchmark experiments were run on NVIDIA RTX4080. 

\subsection{Time analysis}

We benchmark the SIKA-GP with Sparse Indexing (SI) and compare it to the dense GP. For sparse SIKA-GP (ours), the sparsely activated basis functions are stored as a \textit{sparse} tensor vector $\psi$ of shape \texttt{[B,D,L+2]}. For dense inference, the basis functions are stored as a \textit{dense} tensor vector $\phi(x)$ of shape $\texttt{[B,D,M]}$, where $M=2^L+1$. 

We record the inference time across varying dyadic levels $L$, batch sizes $B$, a range of MC samples $S$, and varying feature dimensions $D$ to demonstrate the improved efficiency and robustness of the sparse SIKA-GP. The results are shown in Figure 5 and 6.

\subsection{DGP on UCI regression}
\label{subsec:regression supp}

\paragraph{Experiment Setup} The DGP models are designed by stacking 2 hidden layers of SIKA-GP. The dyadic level of each hidden layer is set to $L=7$, which corresponds to $129$ inducing points. The data for the UCI regression are available at \url{https://github.com/treforevans/uci_datasets}. The model architecture and hyperparameters of the regression is shown in \Cref{tab:model regression}.

\begin{table}[ht]
\caption{Model architectures for regression on UCI datasets.}
\centering
\resizebox{0.6\linewidth}{!}{
\begin{tabular}{l|l|cccc}
\toprule[1pt]
Model                   & Hyper-parameter          & Gas       & Parkinson    & Kin40K  & Protein  \\
\midrule[0.5pt]
\multirow{4}{*}{DGP}    & Hidden layers        & 2         & 2   & 2 & 2   \\
                        & Hidden features         & 128         & 64         & 64   & 64         \\
                        & Dense basis     & 129          & 129          & 129   & 129         \\
                        & Grid bounds              & {[}0,1{]} & {[}0,1{]} & {[}0,1{]} & {[}0,1{]} \\
                        & Training epochs       & 100         & 100         & 100  & 100         \\
                        & Batch size & 512 & 512 & 512 & 512 \\
                        & Training samples & 10 & 10 & 10 & 10 \\
                        & Testing samples & 20 & 20 & 20 & 20 \\
                        & Learning rate & 0.001 & 0.001 & 0.001 & 0.001 \\
                        & Weight decay & 0.0005 & 0.0005 & 0.0005 & 0.0005 \\
\midrule[0.5pt]
\multirow{4}{*}{DGP-SI}    & Hidden layers        & 2         & 2   & 2 & 2   \\
                        & Hidden features         & 128         & 64         & 64   & 64         \\
                        & Sparsely activated basis   & 7          & 7          & 7   & 7         \\
                        & Grid bounds              & {[}0,1{]} & {[}0,1{]} & {[}0,1{]} & {[}0,1{]} \\
                        & Training epochs       & 100         & 100         & 100  & 100         \\
                        & Batch size & 512 & 512 & 512 & 512 \\
                        & Training samples & 10 & 10 & 10 & 10 \\
                        & Testing samples & 20 & 20 & 20 & 20 \\
                        & Learning rate & 0.001 & 0.001 & 0.001 & 0.001 \\
                        & Weight decay & 0.0005 & 0.0005 & 0.0005 & 0.0005 \\
\bottomrule[1pt]
\end{tabular}

}
\label{tab:model regression}
\end{table}

\paragraph{Metrics}
Consider a test dataset of size $T$, denoted by $\mathcal{D}_{\text{test}}=\{\xv_t, y_t\}_{t=1}^{T}$, with $\hat{y}_t=\mathcal{M}(\xv_t)$ as the prediction of model $\mathcal{M}$ given input $\xv_t$. DGP models on regression tasks are evaluated using three criteria: Root Mean Squared Error (RMSE), Negative Log Predictive Density (NLPD), and the average runtime per epoch.

\textbf{RMSE} is used to evaluate predictive accuracy by quantifying the degree to which the predictions diverge from the actual target values.
\begin{align}
    \text{RMSE}(\mathcal{D}_{\text{test}},\mathcal{M}) = \sqrt{ \frac{1}{T} \sum_{t=1}^{T}(y_t - \Eb[\mathcal{M}(\xv_t)])^2 },
\end{align}
where $\Eb[\mathcal{M}(\xv_t)]$ is estimated by drawing $\tilde{s}$ samples from the learned variational distribution.

\textbf{NLPD} serves as a standard probabilistic measure for uncertainty quantification, which is the negative logarithm of the likelihood of the test data according to the predictive distribution. In terms of Gaussian noisy likelihood, NLPD is given by
\begin{align}
    \text{NLPD}(\mathcal{D}_{\text{test}},\mathcal{M})
    & = - \sum_{t=1}^{T} \log p(y_t = \mu_t \vert \xv_t) =\frac{1}{T}
    \sum_{t=1}^{T} \Big[
    \frac{(y_t - \mu_t)^2}{2\sigma_t^2} + \frac{1}{2} \log(2\pi \sigma_t^2)
    \Big], \\
    \mu_t & = \Eb[\Mc(\xv_t)], ~\sigma_t^2=\mathbb{V}[\Mc(\xv_t)].
\end{align}

\subsection{DKL on image classification}
\label{subsec:classification supp}

\paragraph{Experiment Setup}
We use \texttt{PyTorch} \citep{paszke2019pytorch} to implement DNN feature extractors and \texttt{GPyTorch} \citep{gardner2018gpytorch} to implement GP modules. In classification tasks, we apply a softmax likelihood to normalize the output digits into probability distributions. DNNs are non-Bayesian models trained via negative log-likelihood loss, while DKL and DAK models are trained via ELBO loss. 

\textbf{MNIST}: The feature extractor is a simple CNN: \texttt{Conv2d}(1,32,3) $\rightarrow$ \texttt{Conv2d}(32,64,3) $\rightarrow$ \texttt{MaxPool2d}(2) $\rightarrow$ \texttt{Dropout}(0.25) $\rightarrow$ \texttt{Linear}(9216,128) $\rightarrow$ \texttt{Dropout}(0.5). To make a fair comparison, for both SVDKL and DAK/DKL-SIKA, we applied the same number of inducing points, training epochs, and MC samples. 

\textbf{CIFAR-10}: The feature extractor is a ResNet-18 followed by a linear embedding layer that compressed the $512$ output features into $128$ base GP channels. 

\textbf{CIFAR-100}: The feature extractor is a ResNet-34 followed by a linear embedding layer to compress the $2048$ output features into $512$ base GP channels. For SVDKL, we use a pretrained ResNet-34 and fine-tune the last-layer GP since SVDKL struggled to fit using full training. For DAK and DKL-SIKA, we use both full-training and fine-tuning strategy.

\begin{table}[ht]
\caption{Model architectures for image classification on MNIST, CIFAR-10 and CIFAR-100.}
\centering
\resizebox{0.8\linewidth}{!}{
\begin{tabular}{l|l|ccc}
\toprule[1pt]
Model                   & Hyper-parameter          & MNIST       & CIFAR-10    & CIFAR-100   \\
\midrule[0.5pt]
\multirow{5}{*}{SVDKL} & Feature extractor        & CNN         & ResNet-18   & ResNet-34   \\
                        & NN out features         & 64         & 128         & 128         \\
                        & Inducing points per feature      & 128          & 128          & 128          \\
                        & Grid bounds              & {[}0,1{]} & {[}0,1{]} & {[}0,1{]} \\
                        & Training epochs       & 20         & 200         & 50         \\
                        & Training samples & 20 & 20 & 20 \\
                        & Testing samples & 100 & 100 & 100 \\
                        & Training strategy       & Full-training         & Full-training         & Fine-tuning         \\
\midrule[0.5pt]
\multirow{4}{*}{DAK}    & Feature extractor        & CNN         & ResNet-18   & ResNet-34   \\
                        & NN out features         & 64         & 128         & 512         \\
                        & Dense basis     & 129          & 129          & 129          \\
                        & Grid bounds              & {[}0,1{]} & {[}0,1{]} & {[}0,1{]} \\
                        & Training epochs       & 20         & 200         & 200/50         \\
                        & Training samples & 20 & 20 & 20 \\
                        & Testing samples & 100 & 100 & 100 \\
                        & Training strategy      & Full-training         & Full-training         & Full-training/Fine-tuning         \\
\midrule[0.5pt]
\multirow{4}{*}{DKL-SIKA}    & Feature extractor        & CNN         & ResNet-18   & ResNet-34   \\
                        & NN out features         & 64         & 128         & 512         \\
                        & Sparsely activated basis     & 7          & 7          & 7          \\
                        & Grid bounds              & {[}0,1{]} & {[}0,1{]} & {[}0,1{]} \\
                        & Training epochs       & 20         & 200         & 200/50         \\
                        & Training samples & 20 & 20 & 20 \\
                        & Testing samples & 100 & 100 & 100 \\
                        & Training strategy      & Full-training         & Full-training         & Full-training/Fine-tuning         \\
\bottomrule[1pt]
\end{tabular}

}
\label{tab:model classification}
\end{table}

\begin{table}[ht]
\caption{Details of training optimizer for image classification on MNIST, CIFAR-10 and CIFAR-100.}
\centering
\resizebox{0.6\linewidth}{!}{

\begin{tabular}{l|ccc}
\toprule[1pt]
Optimization      & MNIST                                                             & CIFAR-10                                                                                                  & CIFAR-100                                                                                                 \\
\midrule[0.5pt]
Optimizer         & Adadelta                                                          & SGD                                                                                                       & SGD                                                                                                       \\
Initial lr.       & 1.0                                                               & 0.1                                                                                                       & 0.1                                                                                                       \\
Weight decay      & 0.0001                                                            & 0.0001                                                                                                    & 0.0001                                                                                                    \\
Scheduler         & StepLR                                                            & CosineAnnealingLR                                                                                         & CosineAnnealingLR                                                                                         \\
\midrule[0.5pt]
Data Augmentation & MNIST                                                             & CIFAR-10                                                                                                  & CIFAR-100                                                                                                 \\
\midrule[0.5pt]
RandomCrop        & -                                                                 & size=32, padding=4                                                                                        & size=32, padding=4                                                                                        \\
HorizontalFlip    & -                                                                 & p=0.5                                                                                                     & p=0.5                                                                                                     \\
% Normalization     & \begin{tabular}[c]{@{}l@{}}mean=0.1307,\\ std=0.3081\end{tabular} & \begin{tabular}[c]{@{}l@{}}mean={[}0.4914,0.4822,0.4465{]},\\ std={[}0.2023,0.1994,0.2010{]}\end{tabular} & \begin{tabular}[c]{@{}l@{}}mean={[}0.5071,0.4867,0.4408{]},\\ std={[}0.2675,0.2565,0.2761{]}\end{tabular} \\
\bottomrule[1pt]
\end{tabular}
}
\label{tab:optimizer classification}
\end{table}

The setting of all training tasks are described in \Cref{tab:model classification} and \Cref{tab:optimizer classification}. For DAK, we implemented DAK-MC using Monte Carlo estimation given the intractable softmax likelihood. We used dyadic inducing points with a level $7$ as $\Uv^{[7]}=\{ 0, 1, 1/2, 1/4, 3/4, \ldots, 127/128 \}$ of size 129 for each base GP component. For DKL-SIKA, the model has the same architecture and hyperparameters with DAK but with the inference method based on sparse indexing. For SV-DKL, we employed the \texttt{VariationalELBO} with \texttt{SoftmaxLikelihood} as the variational loss objective. \texttt{GridInterpolationVariationalStrategy} is applied within \texttt{IndependentMultitaskVariationalStrategy} to perform additive KISS-GP approximation. For each KISS-GP unit, we used $128$ variational inducing points initialized on a grid of size $[0,1]$. 

\paragraph{Metrics} 
The model's performance is evaluated through four standard metrics: Top-1 accuracy, ELBO, Negative Log Likelihood (NLL), and Expected Calibration Error (ECE).

\textbf{ECE} quantifies the degree of ``calibration'' of a probabilistic model in UQ, specifically for classification problems. It is defined as the weighted average of the absolute difference between the model's predicted probability (confidence) and the actual outcome (accuracy) over several bins of predicted probability. Mathematically, ECE is given by:
\begin{align}
    \text{ECE} =\sum_{m=1}^{M} \frac{\left| B_{m} \right|}{n} \left| \text{acc} (B_{m})-\text{conf} (B_{m}) \right|,
\end{align}
where $M$ is the number of bins into which the confidence values are partitioned, $B_m$ is the set of indices of samples whose predicted confidence falls into the $m$-th bin, $n$ is the total number of samples.

\subsection{DKL on transformer-based language models}
\label{subsec:tlm supp}

% \section{Detailed Experimental Setup: CLINC150 Intent Classification and OOD Detection}
% \label{app:clinc_setup}

\textbf{Task and Dataset}:
We evaluate intent classification and out-of-domain (OOD) detection on CLINC150.
The dataset contains 150 in-domain (ID) intents and an additional out-of-scope class.
In our processed version, ID labels correspond to intents $\{0,\dots,149\}$ and OOD examples are assigned label $150$.
We follow the standard split provided by the dataset (\texttt{train}/\texttt{validation}/\texttt{test}).
For ID intent classification, we train only on ID examples by filtering out all instances with label $150$ from the training split.
For OOD detection, we form an evaluation set by combining the ID portion of the test split with the OOD portion of the test split.

\textbf{Input Preprocessing}:
Each utterance is tokenized using the tokenizer associated with \texttt{distilbert-base-uncased}.
We truncate or pad sequences to a fixed maximum length of $L_{\max}=48$ tokens.
Padding uses the tokenizer's default padding token and attention masks are provided to the encoder.
All text is lowercased implicitly by the uncased tokenizer.

\textbf{Backbone Encoder}:
We use \texttt{distilbert-base-uncased} as the transformer encoder.
Let $x$ denote an input utterance and let $\mathbf{H}\in\mathbb{R}^{L\times d}$ be the final-layer hidden states, with hidden size $d=768$.
We use \textsc{CLS} pooling to obtain a sentence representation
\begin{equation}
\mathbf{z} = \mathbf{H}_{1,:}\in\mathbb{R}^{d},
\end{equation}
where index $1$ denotes the first token position (the \textsc{CLS} token).
Following common practice for stable training and improved conditioning, we optionally apply a learned projection
\begin{equation}
\tilde{\mathbf{z}} = \mathbf{W}_p \mathbf{z} + \mathbf{b}_p,\qquad \mathbf{W}_p\in\mathbb{R}^{r\times d},
\end{equation}
with projection dimension $r=256$ in our default setting.

\textbf{Predictive Head: DKL-SIKA (SIKA-GP on Transformer Features).}
On top of transformer features, we attach a Gaussian process predictive layer implemented with SIKA.
We treat the transformer as a feature map and define a GP over the projected representation $\tilde{\mathbf{z}}$.
For multi-class intent classification with $C=150$ ID intents, the head produces class logits whose predictive distribution is obtained via the GP posterior.
Training uses a variational sparse GP objective (ELBO) with inducing variables, where SIKA parameterizes the kernel approximation to improve scalability in both training and inference while retaining GP-style uncertainty.

\textbf{Training Objective and Optimization}:
We optimize a variational objective consisting of the classification likelihood and a KL regularizer arising from the GP variational approximation.
Concretely, for a minibatch $\mathcal{B}$, we minimize
\begin{equation}
\mathcal{L}(\mathcal{B}) =
\mathcal{L}_{\text{CE}}(\mathcal{B}) + \beta \cdot \frac{\mathrm{KL}}{N},
\end{equation}
where $\mathcal{L}_{\text{CE}}$ is the cross-entropy over the $C=150$ ID intents, $\mathrm{KL}$ is the variational KL term from the GP head, $N$ is the number of ID training examples, and $\beta$ controls the KL strength.
We use KL warm-up by linearly increasing $\beta$ from $0$ to $1$ over the first $15\%$ of training steps to stabilize early optimization.

We use AdamW for optimization with decoupled weight decay.
Unless stated otherwise, we use a batch size of 32 and train for 20 epochs. We clip the global gradient norm to 1.0.

\textbf{Freezing schedule}:
To reduce optimization instability when coupling a probabilistic head to a large encoder, we freeze the DistilBERT encoder for the first two epochs and train only the projection layer and GP head.
Afterwards, we unfreeze the last two transformer layers and continue end-to-end fine-tuning with a smaller encoder learning rate.
Concretely, we use learning rate $10^{-3}$ for the projection/head parameters and $2\times 10^{-5}$ for the unfrozen encoder layers.
We apply a linear learning-rate schedule with a short warm-up (6\% of total steps).

\textbf{Inference and Uncertainty Scores}:
At test time, we compute predictive probabilities for each example and derive uncertainty scores used for OOD detection.
We report two standard uncertainty measures.

\textit{Predictive entropy.}
Let $\bar{\mathbf{p}}\in\mathbb{R}^{C}$ denote the predictive class probabilities. Predictive entropy is
\begin{equation}
H(\bar{\mathbf{p}}) = -\sum_{c=1}^{C} \bar{p}_c \log \bar{p}_c.
\end{equation}

\textit{Mutual information (epistemic proxy).}
We estimate epistemic uncertainty via mutual information between predictions and model parameters.
With $T$ stochastic predictive draws (from the GP posterior predictive), let $\mathbf{p}^{(t)}$ be the probability vector for draw $t$ and $\bar{\mathbf{p}}=\frac{1}{T}\sum_t \mathbf{p}^{(t)}$.
We compute
\begin{equation}
\mathrm{MI} = H(\bar{\mathbf{p}}) - \frac{1}{T}\sum_{t=1}^{T} H\!\left(\mathbf{p}^{(t)}\right).
\end{equation}
Unless specified otherwise, we use $T=20$ draws.

\textbf{Evaluation Metrics}:
We report ID intent classification accuracy on the ID portion of the test set.
For calibration, we compute negative log-likelihood (NLL) and expected calibration error (ECE) on ID and OOD subsets separately (as shown in Table~4).
For OOD detection, we treat OOD as the positive class and use uncertainty scores as detection scores to report AUROC and AUPRC.
% based on these scores, and additionally report $\mathrm{FPR@95TPR}$ computed by selecting the smallest threshold that achieves $95\%$ true positive rate on OOD and measuring the corresponding false positive rate on ID.

\textbf{Implementation Details and Reproducibility}:
All experiments are implemented in PyTorch using the Hugging Face \texttt{transformers} and \texttt{datasets} libraries for model initialization and data loading.
We fix random seeds for Python, NumPy, and PyTorch and report mean $\pm$ standard deviation over multiple runs (matching the main paper protocol).
Unless otherwise noted, training speed (min/epoch) is measured as wall-clock time per epoch averaged over the full training run, excluding one-time data preprocessing and model download overhead.

\section{Additional Experiments}

\subsection{1D example}

In this section, we design an example of 1-dimensional (1D) regression to illustrate the impact of MC sampling and the depth of the GP layers on the predictive distribution and uncertainty quantification.

\textbf{Data}: Training data are generated by a GP with zero mean and squared exponential (SE) kernel $k(x,x')=\exp(-(x-x')^2)$ in $[-3,3]$. The Testing data are generated by the same GP but in a larger region of $[-5,5]$. Therefore, the testing points $\Xv_{\text{test}}\in [-3,3]$ are in-distribution data, while $\Xv_{\text{test}}\in [-5,-3)\cup(3,5]$ are out-of-distribution (OOD) data.

\textbf{Model:} The DGP model consists of 2 hidden layers of SIKA-GP with a hidden feature dimension of $10$. The DKL model consists of a feature extractor, which a MLP model with dimensions $1\rightarrow 100\rightarrow 50\rightarrow 10$, and a last-layer SIKA-GP. The dyadic levels of all SIKA-GP layers are set to $6$ ($65$ inducing points in total).

\textbf{Results}: \Cref{fig:1d-example} shows the predictive distribution of trained DGP and DKL models in the testing data. Utilizing a greater number of MC samples enhances the predictive distribution for both in-distribution and out-of-distribution testing data. However, in modern BNNs, it is common to choose only $1$ MC sample due to the expensive computational cost, which may sacrifice uncertainty estimation. With accelerated inference of our SIKA-GP, it becomes practically feasible to estimate a more accurate posterior with more MC samples.
\begin{itemize}
\item Impact of MC samples: as the number of MC samples grows, both DGP and DKL models have less tendency to be over-confident. Empirically, with $S\geq 20$, the predictive posterior is reasonably well-behaved. Therefore, we chose $S=20$ in experiments of image classification with DKL.

\item Impact of GP depth: With identical MC samples, deeper GP layers result in an improved predictive posterior, particularly when $S$ is quite small. With an increase in $S$, the disparity in the predictive distribution between DGP and DKL decreases.
\end{itemize}

\begin{figure*}[ht]
    \centering
    \subfloat[DGP, S=1. \label{fig:1d-csgp-mc01}]{\includegraphics[width=.25\linewidth]{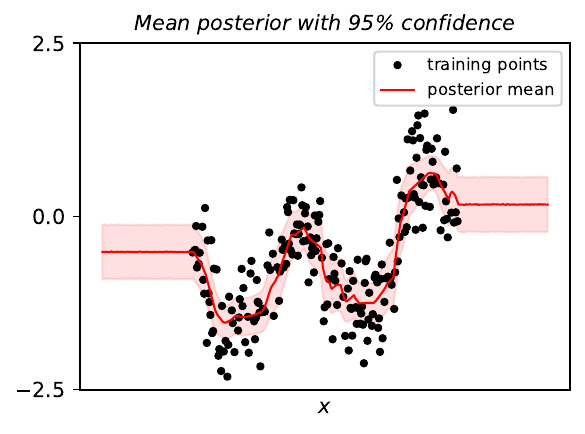}}
    \subfloat[DGP, S=10. \label{fig:1d-csgp-mc10}]{\includegraphics[width=.25\linewidth]{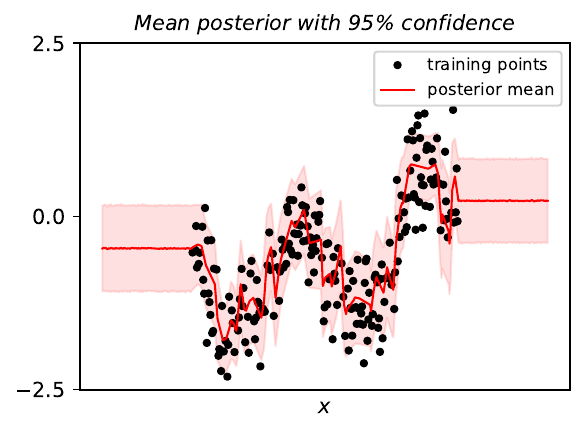}}
    \subfloat[DGP, S=20. \label{fig:1d-csgp-mc20}]{\includegraphics[width=.25\linewidth]{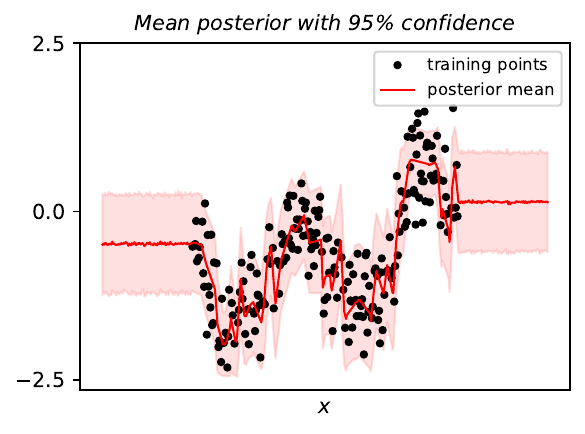}}
    \subfloat[DGP, S=50. \label{fig:1d-csgp-mc50}]{\includegraphics[width=.25\linewidth]{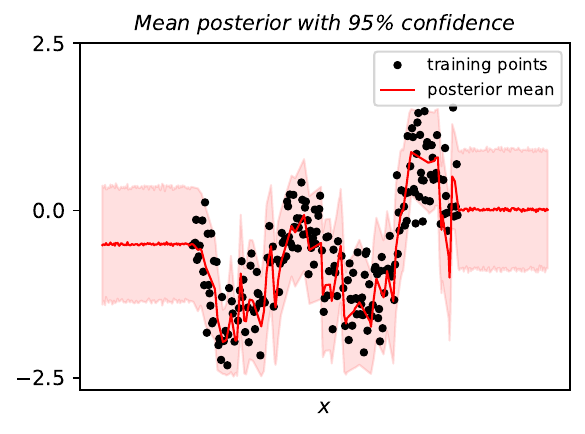}}
    
    \subfloat[DKL, S=1. \label{fig:1d-dkl-mc01}]{\includegraphics[width=.25\linewidth]{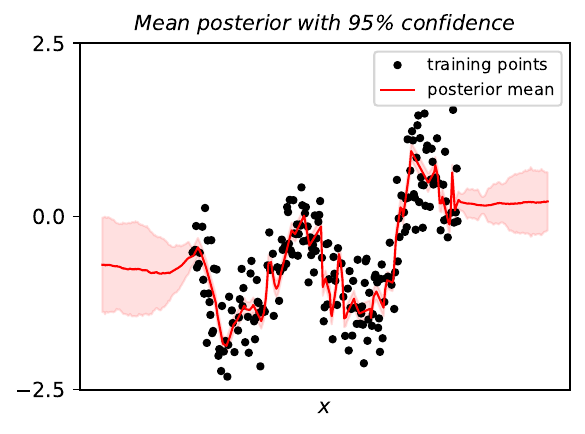}}
    \subfloat[DKL, S=10. \label{fig:1d-dkl-mc10}]{\includegraphics[width=.25\linewidth]{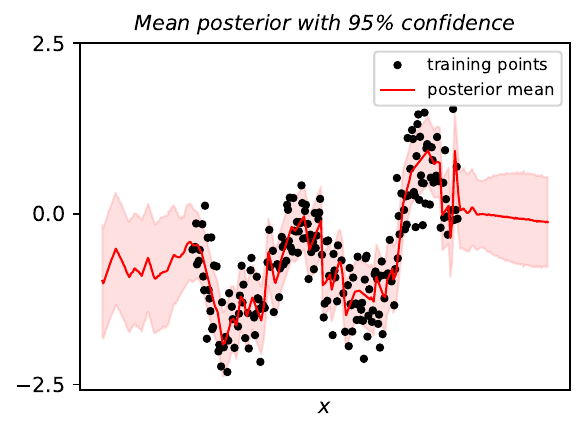}}
    \subfloat[DKL, S=20. \label{fig:1d-dkl-mc20}]{\includegraphics[width=.25\linewidth]{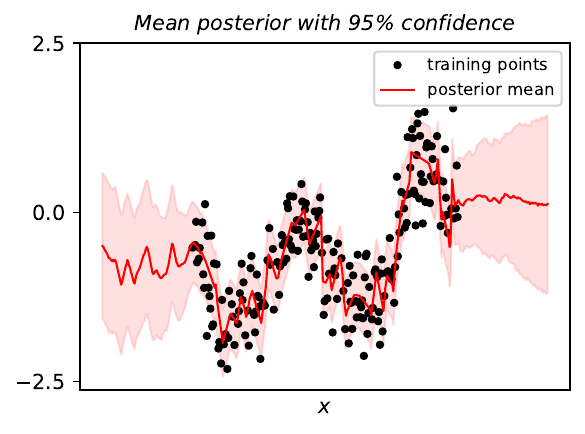}}
    \subfloat[DKL, S=50. \label{fig:1d-dkl-mc50}]{\includegraphics[width=.25\linewidth]{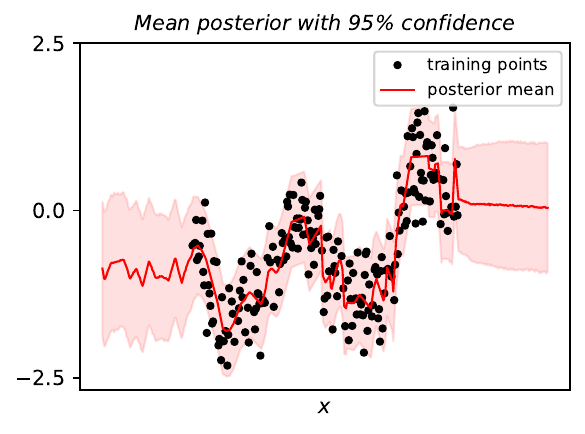}}
    \caption{1D regression example of Deep SIKA-GP and DKL with different numbers of MC samples during training.}
    \label{fig:1d-example}
\end{figure*}

\subsection{Image classification}

We provide additional experimental results on CIFAR-10/100 to show the effect of choosing the different numbers of MC samples during training.

\begin{table}[ht]
\caption{Additional results of DKL-SIKA on image classification.}
\centering

\begin{tabular}{l|l|cccc}
\toprule
Dataset                   & Hyper-parameter          & Acc(\%)       & NLL    & ECE(\%)   & Epoch time(s)  \\
\midrule
\multirow{3}{*}{CIFAR-10} 
                        & $S=1$         & 94.54         & 0.31         & 4.58 & \textbf{11.06}         \\
                        & $S=10$        & \textbf{95.13}         & 0.47   & 4.50  &  13.85   \\
                        & $S=20$        & 94.78         & \textbf{0.27}   & \textbf{4.06}  &  15.26   \\
                        
\midrule
\multirow{3}{*}{CIFAR-100}  & $S=1$         & 75.69         & 1.75         & 6.52      & \textbf{21.22}        \\
                        & $S=10$        & 75.92         & 1.38   & 5.67  &  25.88   \\
                        & $S=20$        & \textbf{76.94}         & \textbf{1.25}   & \textbf{4.31}  &  36.31   \\  
\bottomrule
\end{tabular}
\label{tab:additional-img-class}
\end{table}

\textbf{Model}: We use full-training on DKL-SIKA with different numbers of MC samples. The dimension of the extracted features is $128$ and $512$ for CIFAR-10 and CIFAR-100, respectively. The dyadic level of the SIKA-GP in the last layer is set to $7$, which corresponds to $129$ inducing points.

\textbf{Result}: While a higher number of MC samples may not directly result in increased accuracy, they do enhance uncertainty quantification, as indicated by the reduced NLL and ECE observed in Table \ref{tab:additional-img-class} when $S$ increases from $1$ to $20$. Thus, SIKA-GP provides an efficient method for drawing more MC samples in Bayesian deep learning without compromising the model performance.

\vfill

%%%%%%%%%%%%%%%%%%%%%%%%%%%%%%%%%%%%%%%%%%%%%%%%%%%%%%%%%%%%%%%%%%%%%%%%%%%%%%%
%%%%%%%%%%%%%%%%%%%%%%%%%%%%%%%%%%%%%%%%%%%%%%%%%%%%%%%%%%%%%%%%%%%%%%%%%%%%%%%

\end{document}